\newtheorem{example}{Example}
\newtheorem{theorem}{Theorem}
\newtheorem{proposition}{Proposition}
\newtheorem{definition}{Definition}
\newtheorem{lemma}{Lemma}
\newtheorem{notation}{Notation}
\pgfplotsset{compat=1.13}
\author{Ismaïl Baaj \\ Univ. Artois, CNRS, CRIL, F-62300 Lens, France \\  \href{baaj@cril.fr}{baaj@cril.fr}}
\title{Maximal Consistent Subsystems of Max-T Fuzzy Relational Equations}
\date{}
\begin{document}

\maketitle

\begin{abstract}
  In this article, we study the inconsistency of a  system of $\max-T$ fuzzy relational equations of the form $A \Box_{T}^{\max} x = b$, where $T$ is a t-norm among $\min$, the product or Lukasiewicz's t-norm. For an inconsistent $\max-T$ system, we  directly construct  a canonical  maximal   consistent subsystem (w.r.t  the inclusion order). The main tool used to obtain it is  the analytical formula which compute       the Chebyshev distance  $\Delta = \inf_{c \in \mathcal{C}} \Vert b - c \Vert$ associated to the inconsistent $\max-T$ system, where  $\mathcal{C}$ is the set of second members of consistent systems defined with the same matrix $A$. Based on the same analytical formula, we give, for an inconsistent $\max-\min$ system, an efficient method to obtain all its consistent subsystems,  and we show how to iteratively get all its maximal consistent subsystems.
\end{abstract}

\keywords{Fuzzy set theory ; fuzzy relational equations  }

\section{Introduction}

Sanchez's seminal work on systems of $\max-\min$  fuzzy relational equations established necessary and sufficient conditions for a system to be consistent, i.e., to have solutions \cite{sanchez1976resolution}. In \cite{sanchez1977}, Sanchez showed that if the
system is consistent, it has a greater solution and a finite set of minimal solutions, and he then described the complete set of
solutions of the system.  His work was then followed by studies on solving systems based on $\max-T$ composition \cite{di1984fuzzy,miyakoshi1985solutions,pedrycz1982fuzzy,pedrycz1985generalized} where $T$ is a given t-norm. However, the inconsistency of $\max-T$ systems  remains difficult to address.

Some authors \cite{cechlarova1999resolving,liPhD2009fuzzy} highlighted that handling the inconsistency of a  system of $\max-T$ fuzzy relational equations can be tackled by finding its maximal consistent subsystems. Formally, given an inconsistent system defined by a set of $n$ equations whose indexes are $1, 2, \cdots, n$, a consistent subsystem defined by a subset of these equations, whose indexes form a subset $R \subseteq \{1,2, \cdots,n\}$, is maximal if any subsystem defined by a strict superset of $R$ is inconsistent.

In this article, using   recent works \cite{baaj2023chebyshev,baaj2023maxmin} on the handling of the  inconsistency of  $\max-T$ systems of the form $A \Box_{T}^{\max} x = b$,  where  $T$ is a t-norm among $\min$, product or Łukasiewicz's t-norm, we   directly construct  a canonical maximal consistent subsystem of the system $A \Box_{T}^{\max} x = b$. The indexes of the equations composing this maximal consistent subsystem are obtained by computing, using the $L_\infty$ norm, the Chebyshev distance $\Delta = \inf_{c \in \mathcal{C}} \Vert b - c \Vert$, where $\mathcal{C}$ is the set of second members of consistent systems defined with the same matrix $A$. The author of \cite{baaj2023chebyshev,baaj2023maxmin} gave  three analytical formulas for computing the Chebyshev distances $\Delta$ associated to the three $\max-T$ systems. \\
For this purpose, we start by showing that these three analytic formulas (which we remind in the article, see (\ref{eq:Deltamaxmin}) for $\max-\min$ system, (\ref{eq:deltap}) for $\max-$product system and (\ref{eq:DeltaL}) for max-Łukasiewicz system) 
have a canonical single expression,    which depends only on the t-norm $T$ considered.
For a $\max-T$ system whose matrix is of size $(n,m)$,   $\Delta$'s canonical form is expressed as $\Delta = \max_{1 \leq i \leq n}  \delta_i$ where $\delta_i = \min_{1 \leq j \leq m} \max_{1 \leq k \leq n} \delta_{ijk}^T$, see (\ref{eq:deltaijkT}). The computation of (at most)  $n^2 \cdot m$ numbers $\delta_{ijk}^T$ is therefore necessary to obtain the Chebyshev distance. From $\Delta$'s canonical form, we introduce the set $N_c$, see (\ref{eq:nc}), composed of the indexes $i$ of the equations of the considered system, whose corresponding $\delta_i$ is equal to zero. In order to prove that the subsystem defined by the set $N_c$ is a maximal consistent subsystem,   we first give  an important characterization of  the case $\delta_{ijk}^T > 0$, where $\delta_{ijk}^T$ is involved in   $\Delta$'s canonical form (see (Lemma \ref{Lemma:2}) and also (Lemma \ref{lemma:biaijplus})).
Then, we study any equation whose index $i$ is in the intersection of the complement of the set $N_c$ and a set   $R \subseteq \{1,2,\dots,n\}$ whose corresponding subsystem is consistent,   see (Lemma \ref{lemma:4}), (Lemma \ref{lemma:5}) and (Lemma \ref{Lemma:6}). These last three lemmas and (Proposition \ref{prop:taree}) let us prove our main result in (Theorem \ref{th:1}): \textit{the subsystem formed by the equations of the system whose indexes are in the set $N_c$ is a maximal consistent subsystem.} Since the Chebyshev distance $\Delta$ requires the computation of (at most)  $n^2\cdot m$ numbers, the construction of this maximal consistent subsystem  has the same computational complexity.
\\
We then  study the set formed by the consistent subsystems of an inconsistent $\max-\min$ system. For this purpose, we first arrange in ascending order the coefficients of the second member of the system, and we rely on the Chebyshev distance associated to any subsystem $(S_R)$ defined by a set  $R \subseteq \{1,2,\dots,n\}$, see (Lemma \ref{Lemma:subsys}). Given a  consistent subsystem $(S_R)$ of the subsystem $(S_{\{1 , 2 , \dots s\}})$  where $1 \leq s\leq n-1$, we can construct, by computing $n - s$ explicit numbers, all the consistent subsystems which are defined by subsets $R' \subseteq \{1,2,\dots,n\}$ such that  $R \subset R'$ and $\text{card}(R') = \text{card}(R) + 1$. 
Denoting by  ${\cal E}^s$ 
the set formed by the consistent subsystems of the subsystem    $(S_{\{1 , 2 , \dots s\}})$, 
we thus obtain a method to build ${\cal E}^{s + 1}$  (related to $(S_{\{1 , 2 , \dots s+1\}})$) from ${\cal E}^s$, see (\ref{eq:esplus1toes}), which can be used to get all the maximal consistent subsystems of the whole $\max-\min$ system, see (Proposition \ref{prop:mcsprop}). 

The article is structured as follows. In (Section \ref{sec:background}), we remind the necessary background for solving  $\max-T$ systems and present some of the results on the inconsistency of these systems proven in \cite{baaj2023chebyshev,baaj2023maxmin} i.e.,  the formulas to compute the Chebyshev distances $\Delta$ for a $\max-\min$ system, a $\max-$product system and a max-Łukasiewicz system, see  (\ref{eq:Deltamaxmin}), (\ref{eq:deltap}) and (\ref{eq:DeltaL}) respectively. In (Section \ref{sec:preliminaries}), we   obtain a canonical single expression for these three formulas,  which depends only on the t-norm considered, see (\ref{eq:deltacanonique}).  
 We introduce some notations (Notation \ref{notations:prelem}) and the set $N_c$, see (\ref{eq:nc}). Then, we characterize the  case $\delta^T_{ijk} > 0$, where $\delta^T_{ijk}$ is  involved in $\Delta$'s canonical form, and we provide a formula for the Chebyshev distance associated to a subsystem of the system, which is defined by a subset of equations of the considered system. In (Section \ref{sec:maximal}), we prove some preliminary results 
((Lemma \ref{lemma:4}), (Lemma \ref{lemma:5}), (Lemma \ref{Lemma:6}) and (Proposition \ref{prop:taree})), which are necessary to establish  (Theorem \ref{th:1}).
In (Section \ref{sec:findingminmaxCS}), we present our method for efficiently finding all consistent subsystems of an inconsistent max-min system and apply it to obtain all the maximal consistent systems of the inconsistent system. 
Finally, we conclude by giving some potential applications of our  results, in particular some based on $\max-T$ learning methods.

\section{Background}
\label{sec:background} 

\noindent In this section,  $T$ denotes a continuous t-norm and $\mathcal{I}_T$ its associated residual implicator  \cite{klement2013triangular}. We remind  the three main t-norms ($\min$, product and Łukasiewicz's t-norm) and their associated residual implicator (the Gödel implication, the Goguen implication and Łukasiewicz's implication, respectively). \\ 
We remind the necessary background for solving systems of $\max-T$ fuzzy relational equations of the form $A \Box_{T}^{\max} x = b$. Finally, we present some recent results of \cite{baaj2023chebyshev,baaj2023maxmin} on the handling of the inconsistency of systems of $\max-T$ fuzzy relational equations, in which the author gave, for each $\max-T$ system, an explicit analytic formula for computing the Chebyshev distance $\Delta = \inf_{c \in \mathcal{C}} \Vert b - c \Vert$, where $\mathcal{C}$ is the set of second members of consistent systems defined with the same matrix $A$. 

The following notations are reused from \cite{baaj2023chebyshev,baaj2023maxmin}:
\begin{notation}
   The set $[0,1]^{n\times m}$ denotes the set of matrices of size $(n,m)$ i.e., $n$ rows and $m$ columns, whose components are in $[0,1]$.  The set $[0,1]^{n\times 1}$ is the set of column vectors of $n$ components and $[0,1]^{1\times m}$ is  the set of row matrices of $m$ components.

\noindent The order relation $\leq$ on the set $[0,1]^{n\times m}$ is defined by:
\[ A \leq B \quad \text{iff we have} \quad  a_{ij} \leq b_{ij} \quad \text{ for all } \quad 1 \leq i \leq n, 1 \leq j \leq m,    \]
\noindent where $A=[a_{ij}]_{1 \leq i \leq n, 1 \leq j \leq m}$ and $B=[b_{ij}]_{1 \leq i \leq n, 1 \leq j \leq m}$. 

 For $x,y,z,u,\delta \in [0,1]$, we put:
\begin{itemize}
    \item $x^+ = \max(x,0)$,
    \item $\overline{z}(\delta) = \min(z+\delta,1)$, 
    \item $\underline{z}(\delta) = \max(z-\delta,0) = (z-\delta)^+$.
\end{itemize}
\end{notation}

\subsection{T-norms and their associated residual implicators }

\noindent
A triangular-norm  (t-norm, see  \cite{klement2013triangular}) is a map  $T: [0,1] \times [0,1] \mapsto [0,1]$, which satisfies:
\begin{itemize}
    \item[] $T$ is commutative:   $T(x,y) = T(y,x)$, 
    \item[] $T$ is associative:  $T(x, T(y, z)) = T(T(x, y), z)$, 
    \item[] $T$ is increasing  : $x \leq x' \quad \text{and} \quad y \leq y' \, \Longrightarrow \, T(x, y) \leq T(x', y')$,
    \item[]$T$ has $1$ as neutral element: $T(x, 1) = x$.
\end{itemize}

\noindent To the   t-norm $T$ is associated   the residual implicator  
${\cal I}_T : [0, 1 ] \times [0, 1] \rightarrow [0, 1 ] : (x, y) \mapsto {\cal I}_T(x, y) = \sup\{z \in [0, 1]\,\mid\, T(x, z) \leq y\}$.

\noindent
 For all $a, b   \in [0, 1]$, the main properties of the residual implicator ${\cal I}_T$ associated to a continuous t-norm $T$ are: 
\begin{itemize}
    \item   
${\cal I}_T(a, b) = \max\{z \in [0, 1]\,\mid\, T(a, z) \leq b\}$. Therefore, $T(a, {\cal I}_T(a, b)) \leq b$.
\item
  ${\cal I}_T$ is left-continuous and decreasing in its first argument as well as right-continuous and increasing in its second argument.
  \item
  For all $z\in [0, 1]$, we  have: 
  $$T(a, z) \leq b \Longleftrightarrow z \leq {\cal I}_T(a, b).$$
  \item We have 
  $\,b \leq {\cal I}_T(a, T(a, b))$.
\end{itemize}

\noindent The t-norm $\min$ denoted by $T_M$,  has a residual implicator $\mathcal{I}_{T_M}$ which is the Gödel implication:
\begin{equation}\label{eq:tnormmin}
    T_M(x,y) = \min(x, y) \quad ;  \quad {\cal I}_{T_M}(x,y) = x \underset{G}{\longrightarrow} y = \begin{cases}1 & \text{ if } x \leq y \\ \ y &\text{ if } x > y \end{cases}.
\end{equation}

The t-norm defined by the usual product   is denoted by $T_P$. Its associated residual implicator is the Goguen implication:
\begin{equation}\label{eq:tnormproduct}
    T_P(x,y) = x \cdot y \quad ;  \quad {\cal I}_{T_P}(x,y) = x \underset{GG}{\longrightarrow} y = \begin{cases}1 & \text{ if } x \leq y \\ \frac{y}{x} &\text{ if } x > y \end{cases}.
\end{equation}
Łukasiewicz's t-norm is denoted by $T_L$ and its associated residual implicator is Łukasiewicz's implication ${\cal I}_{T_L}$:
\begin{equation}\label{eq:tnormluka}
    T_{L}(x,y)= \max(x + y - 1, 0) = {(x + y - 1)}^{+} \quad ; \quad 
 {\cal I}_{T_L}(x,y)  = x \underset{L}{\longrightarrow} y = \min(1-x+y,1).
\end{equation}

\subsection{Solving systems of \texorpdfstring{$\max-T$}{max-T} fuzzy relational equations}

A system of $\max-T$ fuzzy relational equations based on a matrix $A=[a_{ij}] \in [0,1]^{n\times m}$   and a column-vector $b=[b_{i}] \in [0,1]^{n\times 1}$   is of the form:\begin{equation}\label{eq:sys}
    (S): A \Box_{T}^{\max} x = b,
\end{equation}
\noindent where $x = [x_j]_{1 \leq j \leq m} \in [0,1]^{m\times 1}$ is  an unknown vector and the matrix product $\Box_{T}^{\max}$ uses the continuous  t-norm $T$ as the product and $\max$  as the addition.

\noindent Using the vector \begin{equation}\label{eq:gr:sol}
    e = A^t \Box_{{\cal I}_T}^{\min} b,
\end{equation}
\noindent where $A^t$ is the transpose of $A$ and the matrix product $\Box_{{\cal I}_T}^{\min}$ uses the residual implicator  ${\cal I}_T$ (associated to $T$) as the product and $\min$ as the addition, we have the following equivalence  proved by  Sanchez for $\max-\min$ composition \cite{sanchez1976resolution}, and extended to $\max-T$ composition by Pedrycz \cite{pedrycz1982fuzzy,pedrycz1985generalized} and Miyakoshi and Shimbo \cite{miyakoshi1985solutions}:
\begin{equation}\label{eq:consiste}
    A \Box_T^{\max} x = b \text{ is consistent}\Longleftrightarrow A \Box_T^{\max} e = b. 
\end{equation}

\begin{example}\label{ex:1}
We study the following $\max-\min$ system denoted $A \Box_{\min}^{\max} x = b$ where: 
\begin{equation}\label{eq:Abofpedrycz}
A = \begin{bmatrix}
    1 & 0.4 & 0.5 & 0.7\\
    0.7 & 0.5 & 0.3 & 0.5\\
    0.2 & 1 & 1 & 0.6\\ 
    0.4 & 0.5 & 0.5 & 0.8
\end{bmatrix}\text{ and } b = \begin{bmatrix}
        0.8 \\ 0.7 \\ 0.4 \\ 0.4
    \end{bmatrix}.
\end{equation}
We compute the potential greatest solution of the system  using the Gödel implication $\rightarrow_G$, see (\ref{eq:tnormmin}), which is associated to the t-norm $\min$: $$ e = A^t \Box_{\rightarrow_G}^{\min} b = \begin{bmatrix}
    0.8\\ 
    0.4 \\
    0.4 \\
    0.4
\end{bmatrix}.$$
\noindent We check that: $$A \Box_{\min}^{\max} e = \begin{bmatrix}
        0.8 \\ 0.7 \\ 0.4 \\ 0.4
    \end{bmatrix} = b.$$ So the system $A \Box_{\min}^{\max} x = b$ is consistent. 

\end{example}

\subsection{Chebyshev distance associated to the second member of a system of \texorpdfstring{$\max-T$}{max-T} fuzzy relational equations}

\noindent To the matrix $A$ and the second member $b$ of the system $(S)$ of $\max-T$ fuzzy relational equations, see (\ref{eq:sys}), is  associated  the set of  vectors $c = [c_i] \in [0,1]^{n \times 1}$ such that the system $A \Box_{T}^{\max} x = c$ is consistent:
\begin{equation}\label{def:setofsecondmembersB}
    \mathcal{C} = \{ c = [c_i] \in {[0,1]}^{n \times 1} \mid  A \Box_{T}^{\max} x = c \text{ is consistent} \}.
\end{equation}
\noindent This set allows us to define the Chebyshev distance associated to the second member $b$ of the system $(S)$.
 \begin{definition}\label{def:chebyshevdist}
The Chebyshev distance associated to the second member $b$ of the system $(S): A \Box_{T}^{\max}x = b$ is: 
\begin{equation}\label{eq:delta}
    \Delta = \Delta(A,b) =  \inf_{c \in \mathcal{C}} \Vert b - c \Vert 
    \end{equation}

    \end{definition}
\noindent where:
\[ \Vert b - c \Vert = \max_{1 \leq i \leq n}\mid b_i - c_i\mid.\]

The following result was proven for $\max-\min$ system in \cite{cuninghame1995residuation} and recently extended to $\max-T$ systems in \cite{baaj2023chebyshev}:

\begin{equation}\label{eq:deltaCUNINGH}
    \Delta =        \min\{\delta\in [0, 1] \mid \underline b(\delta) \leq F(\overline b(\delta))\},
\end{equation}

\noindent which involves a vector inequality based on the following application:
\begin{equation}\label{eq:F} 
    F :  [0, 1]^{n \times 1} \longrightarrow [0, 1]^{n \times 1} : c =[c_i] \mapsto F(c) = A \Box_{T}^{\max} (A^t \Box_{{\cal I}_T}^{\min}
    c) = [F(c)_i]
\end{equation}
\noindent where:
\begin{equation}\forall i \in \{1, 2, \dots, n\},\, F(c)_i = \max_{1 \leq j \leq m} T(a_{ij}, \min_{1 \leq k \leq n} 
{\cal I}_T(a_{kj}, c_k)).\end{equation}
We remind (see \cite{baaj2023chebyshev,baaj2023maxmin}) that for any $c\in [0 , 1]^n$ , we have the inequalities: 
\begin{equation}\label{Plus2}
\forall i \in \{1 , 2 , \dots , n\}\,,\, F(c)_i \leq c_i.
\end{equation}

\noindent By solving (\ref{eq:deltaCUNINGH}) in the case of a system of $\max-\min$ fuzzy relational equations $A \Box_{\min}^{\max}x = b$, the author of \cite{baaj2023maxmin} gave the following analytical formula for computing the Chebyshev distance associated to its second member $b$: 
\begin{equation} \label{eq:Deltamaxmin}
\Delta = \max_{1 \leq i \leq n}  \min_{1 \leq j \leq m}\,\max[ (b_i - a_{ij})^+,  \max_{1 \leq k \leq n}\,  \,\sigma_G\,(b_i, a_{kj}, b_k)],\end{equation}
\noindent \text{ where }
\begin{equation}\label{eq:sigmaG}
    \sigma_G(x, y, z) = \min( \frac{(x - z)^+}{2},(y - z)^+).
\end{equation}

Similarly, for the case of a system of $\max$-product fuzzy relational equations $A \Box_{T_P}^{\max}x = b$,  the author of \cite{baaj2023chebyshev} gave the  following analytical formula for computing the Chebyshev distance associated to $b$:
\begin{equation} \label{eq:deltap}
\Delta = \max_{1 \leq i \leq n} \min_{1 \leq j \leq m}\, \max_{1 \leq k \leq n}\,\sigma_{GG}\,(a_{ij}, b_i, a_{kj}, b_k),     
\end{equation}
\noindent where
\begin{equation}\label{eq:sigmagg}
    \sigma_{GG}(u,x,y,z) = \max[(x-u)^+, \min(\varphi(u,x,y,z),(y-z)^+)] \end{equation}
    \noindent \text{ and } \begin{equation}\label{eq:varphi}
    \varphi(u,x,y,z) = \begin{cases}\frac{(x \cdot y - u\cdot z)^+}{u+y} &\text{ if } u > 0 \\ x &\text{ if } u = 0\end{cases}. \end{equation}

The author of \cite{baaj2023chebyshev} also gave the following analytical formula for computing Chebyshev distance associated to the second member $b$ of a system of $\max$-Łukasiewicz fuzzy relational equations $A \Box_{T_L}^{\max}x = b$:
\begin{equation} \label{eq:DeltaL}
\Delta  = \max_{1 \leq i \leq n}  \min_{1 \leq j \leq m}\, \max_{1 \leq k \leq n}\,\sigma_{L }\,(1  - a_{ij}, b_i, a_{kj}, b_k),
\end{equation}
\noindent where
\begin{equation}\label{eq:sigmaL}
  \sigma_L(u,x,y,z) = \min(x, \max(v^+, \frac{(v + y - z )^+}{2})) \text{ with } v = x+u-1.
\end{equation}

From these formulas, the author of \cite{baaj2023chebyshev,baaj2023maxmin} proved that $F(\overline{b}(\Delta))$ is the greatest Chebyshev approximation  of the second member $b$, so the system $A \Box_{T}^{\max} x = F(\overline{b}(\Delta))$ is consistent and $\Vert F(\overline{b}(\Delta)) - b \Vert = \Delta$. Therefore, the author showed the following important equivalence for $\max-T$ systems:
\begin{equation}\label{eq:deltaegalzero}
    \Delta = 0 \Longleftrightarrow \text{ The system }(S) \text{ is consistent.}
\end{equation}

\begin{example}(continued) 
We reuse the matrix $A$ and the vector $b$ of (Example \ref{ex:1}), see (\ref{eq:Abofpedrycz}).

\begin{itemize}
    \item The Chebyshev distance associated to the $\max-\min$ system $A \Box_{\min}^{\max} x =b$ is equal to zero  (this system is consistent).

    \item  The Chebyshev distance associated to the second member of the $\max-$product system $A \Box_{T_P}^{\max} x =b$ is roughly equal to $0.083$ (this system is inconsistent).  We compute $F(\overline{b}(\Delta)) = \begin{bmatrix}
        0.883\\ 0.6181\\ 0.483\\ 0.483
    \end{bmatrix}$ and the system $A \Box_{T_P}^{\max} x =F(\overline{b}(\Delta))$ is consistent. 
    \item The Chebyshev distance associated to the  second member of the $\max-$Łukasiewicz system $A \Box_{T_L}^{\max} x =b$ is equal to $0.1$ (this system is inconsistent).  We compute $F(\overline{b}(\Delta)) = \begin{bmatrix}
        0.9\\ 0.6\\ 0.5\\ 0.5 
    \end{bmatrix}$ and the system $A \Box_{T_L}^{\max} x =F(\overline{b}(\Delta))$ is consistent. 
\end{itemize}

\end{example}

\section{Preliminaries}
\label{sec:preliminaries}
In this section, we begin by showing that the formulas ((\ref{eq:Deltamaxmin}), (\ref{eq:deltap}) and (\ref{eq:DeltaL})) of the Chebyshev distance $\Delta = \Delta(A,b)$  associated to the three $\max-T$ systems of the form $(S) : A \Box_T^{\max} x = b$, see (\ref{eq:sys}),   have a canonical single expression, which depends only on the t-norm $T$. 
We then give some notations (Notation \ref{notations:prelem}). In particular, we denote  by $(S_R)$  the subsystem of the system $(S)$ defined using    the set $R \subseteq \{1,2,\dots,n\}$ of the indexes of the equations taken from the system $(S)$ to form the subsystem $(S_R)$.

\noindent We establish some properties ((Lemma \ref{lemma:biaijplus}), (Lemma \ref{Lemma:2}) and (Lemma \ref{Lemma:subsys})) of the numbers $\delta_{ijk}^T$ involved in the canonical formula of $\Delta$. These results will be useful in the next section to prove (Theorem \ref{th:1}), which allows us to obtain a canonical maximal consistent subsystem of an inconsistent system $(S)$ just by computing the Chebyshev distance $\Delta$ associated to the system $(S)$.

\subsection{\texorpdfstring{$\Delta$'s canonical form and notations}{Delta's canonical form and notations}}
Let us rewrite the Chebyshev distance associated to the second member of a system of $\max-\min$ fuzzy relational equations, see (\ref{eq:Deltamaxmin}), as follows:
\begin{equation} \label{eq:Deltamaxmin2}
 \Delta = \max_{1 \leq i \leq n}  \min_{1 \leq j \leq m}\, \max_{1 \leq k \leq n} \max[(b_i - a_{ij})^+,\,  \,\sigma_G\,(b_i, a_{kj}, b_k)]\end{equation}

Then, from the formulas of the Chebyshev distances $\Delta$ associated to each of three systems of $\max-T$ fuzzy relational equations, see  (\ref{eq:deltap}), (\ref{eq:DeltaL}) and (\ref{eq:Deltamaxmin2}), we can give  a canonical formula for $\Delta$ which only depends on the choice of the t-norm $T$:
\begin{equation}\label{eq:deltacanonique}
   \Delta = \max_{1 \leq i \leq n}  \delta_i  \quad \text{ with }
    \quad \delta_i = \min_{1 \leq j \leq m}\, \delta(i , j) \quad \text{ and }
    \quad  \delta(i , j)=\max_{1 \leq k \leq n} \delta_{ijk}^T
\end{equation}
\noindent where:
\begin{equation}\label{eq:deltaijkT}
    \delta_{ijk}^T = \begin{cases}
        \max[(b_i - a_{ij})^+,\,  \,\sigma_G\,(b_i, a_{kj}, b_k)] & \text{ if } T = T_M  \text{ (min)} \\
        \sigma_{GG}\,(a_{ij}, b_i, a_{kj}, b_k)  & \text{ if } T = T_P  \text{ (product)} \\
        \sigma_{L }\,(1  - a_{ij}, b_i, a_{kj}, b_k) & \text{ if } T = T_P \text{ (Łukasiewicz's t-norm)}
    \end{cases}.
\end{equation}
(We remind that $\sigma_G$, $\sigma_{GG}$ and $\sigma_L$ are defined in (\ref{eq:sigmaG}), (\ref{eq:sigmagg}) and (\ref{eq:sigmaL}) respectively and were introduced in  \cite{baaj2023chebyshev,baaj2023maxmin}). 

In this form, we can see that  $\Delta$ requires  computing (at most) $n^2\cdot m$ numbers $\delta_{ijk}^T$.

For a  $\max-T$ system $A \Box_{T}^{\max} x = b$, see (\ref{eq:sys}), we use the following notations:
\begin{notation}\label{notations:prelem} \mbox{}
\begin{itemize}
    \item $N = \{1 ,2 , \dots , n\}, \quad M = \{1 ,2 , \dots , m\}$,
    \item for any subset $R \subseteq \{1,2,\dots, n \}$, we form the following $\max-T$ subsystem:
    \[ (S_R) : A_R \Box_T^{\max} x =  b_R, \text{ where }
A_R =  [a_{ij}]_{i\in R , 1 \leq j \leq m} \text{ and }   b_R = [b_{i}]_{i\in R},
 \] 
 \noindent whose associated Chebyshev distance is denoted $\Delta_R := \Delta(A_R,b_R)$.\\

Thus, for any $i\in\{1 , 2 , \dots , n\}$ , the $\max-T$ system 
$(S_{\{i\}}) : A_{\{i\}} \Box_T^{\max} x = b_{\{i\}}$ is the system reduced to the $i$-th equation of $(S)$. 
\end{itemize}
\end{notation}
\noindent From the canonical form of $\Delta$, see (\ref{eq:deltacanonique}), we extract the indexes of equations whose corresponding $\delta_i$ is equal to zero, in order to form the following set:

\begin{equation}\label{eq:nc}
    N_c = \bigg\{ i \in \{1 ,2 , \dots , n\}\,\mid\,  \delta_i = 0\bigg\}. 
\end{equation}

\noindent The complement of the set $N_c$ is denoted $\overline{N_c}$.

\begin{example}(continued) 
We reuse the matrix $A$ and the vector $b$ of (Example \ref{ex:1}), see (\ref{eq:Abofpedrycz}).

\begin{itemize}
    \item The $\max-\min$ system $A \Box_{\min}^{\max} x =b$ is consistent. We have $\delta_1 = \delta_2 = \delta_3 = \delta_4 = 0$ and the set $N_c$ is equal to $\{1,2,3,4\}$.

    \item  The $\max-$product system $A \Box_{T_P}^{\max} x =b$ is inconsistent.   We have $\delta_1 = \delta_3 = \delta_4  = 0$ and $\delta_2 = 0.083$, so the set $N_c$ is equal to $\{1,3,4\}$. 
    \item The $\max-$Łukasiewicz system $A \Box_{T_L}^{\max} x =b$ is inconsistent.  We have $\delta_1 = \delta_3 = \delta_4  = 0$ and $\delta_2 = 0.1$, so the set $N_c$ is equal to $\{1,3,4\}$.
\end{itemize}
\end{example}

\subsection{Preliminary results}

From $\Delta$'s canonical form, see (\ref{eq:deltacanonique}), we establish:
\begin{lemma}\label{lemma:biaijplus}
For all $i\in N$ and $ j\in M$ (Notation \ref{notations:prelem}), we have $\delta^T_{iji} = (b_i - a_{ij})^+$. 
\end{lemma}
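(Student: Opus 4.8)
The plan is to substitute $k = i$ directly into the three-case definition of $\delta^T_{ijk}$ given in (\ref{eq:deltaijkT}) and to check that, in each case, the ``cross-term'' coupling the $i$-th and $k$-th equations degenerates to $0$, leaving only $(b_i - a_{ij})^+$. The unifying observation is that setting $k = i$ forces the arguments to satisfy $z = b_k = b_i$ and $y = a_{kj} = a_{ij}$, so every subexpression that measures a discrepancy between two distinct equations vanishes. Concretely, for the $\min$ case I would evaluate $\sigma_G(b_i, a_{ij}, b_i)$ from (\ref{eq:sigmaG}): its first inner argument is $\frac{(b_i - b_i)^+}{2} = 0$, so $\sigma_G(b_i, a_{ij}, b_i) = 0$, and the outer $\max$ in $\delta^{T_M}_{iji}$ collapses to $\max[(b_i - a_{ij})^+, 0] = (b_i - a_{ij})^+$.

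For the product case I would compute $\sigma_{GG}(a_{ij}, b_i, a_{ij}, b_i)$ using (\ref{eq:sigmagg})--(\ref{eq:varphi}). When $a_{ij} > 0$, the numerator of $\varphi$ is $(b_i \cdot a_{ij} - a_{ij} \cdot b_i)^+ = 0$, hence $\varphi = 0$ and the inner $\min$ vanishes; when $a_{ij} = 0$ one instead has $\varphi = b_i$, but now the factor $(a_{ij} - b_i)^+ = 0$, so the inner $\min$ still vanishes. In either branch $\delta^{T_P}_{iji} = \max[(b_i - a_{ij})^+, 0] = (b_i - a_{ij})^+$.

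For the Łukasiewicz case I would substitute $u = 1 - a_{ij}$, $x = z = b_i$, and $y = a_{ij}$ into (\ref{eq:sigmaL}). This gives $v = x + u - 1 = b_i - a_{ij}$, so $(v + y - z)^+ = (b_i - a_{ij} + a_{ij} - b_i)^+ = 0$ and the inner $\max$ equals $v^+ = (b_i - a_{ij})^+$. It then remains to discharge the outer $\min$ against $x = b_i$: since $(b_i - a_{ij})^+ \leq b_i$ (trivially when $b_i \leq a_{ij}$, and because $a_{ij} \geq 0$ otherwise), we obtain $\delta^{T_L}_{iji} = \min(b_i, (b_i - a_{ij})^+) = (b_i - a_{ij})^+$.

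I do not expect a genuine obstacle here: the statement reduces to three short direct substitutions, each exhibiting the same cancellation. The two points deserving a little care are the piecewise definition of $\varphi$ in the product case, which must be verified in both the $a_{ij} > 0$ and $a_{ij} = 0$ branches, and the outer $\min$ with $b_i$ in the Łukasiewicz case, which relies on the elementary bound $(b_i - a_{ij})^+ \leq b_i$.
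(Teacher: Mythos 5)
Your proposal is correct and follows essentially the same route as the paper's proof: direct substitution of $k=i$ into (\ref{eq:deltaijkT}) with case-by-case evaluation of $\sigma_G$, $\sigma_{GG}$ (both branches of $\varphi$), and $\sigma_L$. Your explicit justification of the final step $\min(b_i,(b_i-a_{ij})^+)=(b_i-a_{ij})^+$ via the bound $(b_i-a_{ij})^+\leq b_i$ is a small detail the paper leaves implicit, but otherwise the two arguments coincide.
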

\begin{proof}\mbox{}
\begin{itemize}
    \item For $T = T_M$ (min), from $\sigma_G$, see (\ref{eq:sigmaG}), we have for all $i\in N$ and $ j\in M$:  $$\sigma_G(b_i,a_{ij},b_i) = \min(\frac{(b_i - b_i)^+}{2}, (a_{ij}-b_i)^+) = 0.$$ Therefore, from (\ref{eq:deltaijkT}), we have: 
    \begin{align*}
        \delta_{iji}^{T_M} &= \max[(b_i - a_{ij})^+,\,  \,\sigma_G\,(b_i, a_{ij}, b_i)] \\ &
        = (b_i - a_{ij})^+.
    \end{align*}
    \item For $T = T_P$ (product), from the formula of $\varphi$, see (\ref{eq:varphi}), we have for all $i\in N$ and $ j\in M$:  
    $$\varphi(a_{ij},b_i,a_{ij},b_i) = \begin{cases} 0 &\text{ if } a_{ij} > 0 \\ b_i & \text{ if } a_{ij} = 0  \end{cases}.$$
    We remark that if $a_{ij} = 0$, then $(a_{ij}-b_i)^+ = 0$.
    \noindent Thus, whatever if $a_{ij} > 0$ or $a_{ij} = 0$, from the formula of $\sigma_{GG}$, see (\ref{eq:sigmagg}), we have:
    \begin{align*}
        \delta_{iji}^{T_P} &= \sigma_{GG}(a_{ij},b_i,a_{ij},b_i)\\
        &= \max[(b_i-a_{ij})^+, \min(\varphi(a_{ij},b_i,a_{ij},b_i),(a_{ij}-b_i)^+)]\\ 
        &= (b_i-a_{ij})^+.
    \end{align*}
    \item For $T = T_L$ (Łukasiewicz’s t-norm), from the formula of $\sigma_L$, see (\ref{eq:sigmaL}), we have:
    \begin{align*}
        \delta_{iji}^{T_L} &= \sigma_L(1 - a_{ij},b_i,a_{ij},b_i) \\
        &= \min(b_i, \max( (b_i - a_{ij})^+, \frac{(b_i - a_{ij} + a_{ij} - b_i)^+}{2}) \\
        &= \min(b_i,(b_i - a_{ij})^+) = (b_i - a_{ij})^+.
    \end{align*}

\end{itemize}
\end{proof}

\noindent We establish the following equivalences:  
\begin{lemma}\label{Lemma:2}
Let $i\in N$ and $j\in M$ and we assume that  $b_i \leq a_{ij}$. Then, for all $k\in N$, we have:
\begin{itemize}
    \item for $T = T_M$ (min): 
    \begin{equation}\label{eq:M1}
\delta^T_{ijk} > 0 \Longleftrightarrow
b_i > b_k \quad \text{and} \quad a_{kj} > b_k, 
\end{equation}
\item for $T = T_P$ (product): 
\begin{equation}\label{eq:P1}
\delta^T_{ijk} > 0 \Longleftrightarrow
a_{ij} > 0 \quad \text{ and } \quad a_{kj} > b_k 
\quad \text{ and } \quad \dfrac{b_i}{a_{ij}} > \dfrac{b_k}{a_{kj}},
\end{equation}
\item for $T = T_L$ (Łukasiewicz’s t-norm):
\begin{equation}\label{eq:L1}
\delta^T_{ijk} > 0 \Longleftrightarrow
b_i > 0 \quad \text{and} \quad b_i - a_{ij} > b_k - a_{kj}.
\end{equation}
\end{itemize}
\end{lemma}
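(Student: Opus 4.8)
The plan is to treat the three t-norms separately, since each $\delta^T_{ijk}$ is a different explicit expression taken from (\ref{eq:deltaijkT}); but in every case the hypothesis $b_i \le a_{ij}$ plays the same simplifying role. It forces the term $(b_i - a_{ij})^+$ to vanish (for $T_M$ and $T_P$), and likewise makes the quantity $v^+$ vanish for $T_L$, collapsing each $\delta^T_{ijk}$ into a minimum of two nonnegative quantities. Once in that shape, positivity of the minimum is equivalent to both factors being strictly positive, and reading off the condition on each factor yields the stated equivalence. I therefore expect the argument to be a sequence of direct substitutions into the definitions of $\sigma_G$, $\sigma_{GG}$ and $\sigma_L$ (see (\ref{eq:sigmaG}), (\ref{eq:sigmagg}), (\ref{eq:sigmaL})), with no need for the global formula (\ref{eq:deltacanonique}).

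The cases $T = T_M$ and $T = T_L$ are routine. For $T_M$, substituting $b_i \le a_{ij}$ gives $(b_i - a_{ij})^+ = 0$, so $\delta^{T_M}_{ijk} = \sigma_G(b_i, a_{kj}, b_k) = \min\big(\tfrac{(b_i - b_k)^+}{2}, (a_{kj}-b_k)^+\big)$, which is positive exactly when $b_i > b_k$ and $a_{kj} > b_k$, giving (\ref{eq:M1}). For $T_L$, the key computation is $v = b_i + (1 - a_{ij}) - 1 = b_i - a_{ij} \le 0$, hence $v^+ = 0$, so that $\delta^{T_L}_{ijk} = \min\big(b_i, \tfrac{(b_i - a_{ij} + a_{kj} - b_k)^+}{2}\big)$; this is positive iff $b_i > 0$ and $b_i - a_{ij} > b_k - a_{kj}$, giving (\ref{eq:L1}). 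In both cases the substitutions are reversible, so the equivalences hold in both directions.

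The product case is where the real care is needed, and I expect it to be the main obstacle, because of the branching in the definition of $\varphi$ (see (\ref{eq:varphi})) and the need to clear denominators legitimately. Again $b_i \le a_{ij}$ kills the outer $(b_i - a_{ij})^+$, so $\delta^{T_P}_{ijk} = \min\big(\varphi(a_{ij},b_i,a_{kj},b_k),\, (a_{kj}-b_k)^+\big)$, which is positive iff both $a_{kj} > b_k$ and $\varphi > 0$. I would first observe that if $a_{ij} = 0$ then $0 \le b_i \le a_{ij} = 0$ forces $b_i = 0$, so $\varphi = b_i = 0$; hence $\varphi > 0$ already requires $a_{ij} > 0$, the first conjunct of (\ref{eq:P1}). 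When $a_{ij} > 0$ we have $\varphi = \tfrac{(b_i a_{kj} - a_{ij} b_k)^+}{a_{ij} + a_{kj}}$, so $\varphi > 0 \iff b_i a_{kj} > a_{ij} b_k$. The final step is to convert this into $\tfrac{b_i}{a_{ij}} > \tfrac{b_k}{a_{kj}}$: this is legitimate precisely because the companion condition $a_{kj} > b_k \ge 0$ guarantees $a_{kj} > 0$, so one may divide $b_i a_{kj} > a_{ij} b_k$ by the positive quantity $a_{ij} a_{kj}$. Assembling the three conjuncts (and reversing the division in the converse direction) gives (\ref{eq:P1}), completing the proof.
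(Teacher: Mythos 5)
Your proof is correct and takes essentially the same route as the paper's: in each case the hypothesis $b_i \le a_{ij}$ collapses $\delta^T_{ijk}$ (via direct substitution into $\sigma_G$, $\sigma_{GG}$, $\sigma_L$) to a minimum of two nonnegative quantities, and the equivalences are read off factor by factor, with the same $a_{ij}=0$ versus $a_{ij}>0$ branching for $\varphi$ in the product case. Your explicit observation that the companion condition $a_{kj} > b_k \ge 0$ forces $a_{kj} > 0$, which legitimizes passing between $b_i a_{kj} > a_{ij} b_k$ and $\frac{b_i}{a_{ij}} > \frac{b_k}{a_{kj}}$, is in fact slightly more careful than the paper's parenthetical, which only records $a_{ij} + a_{kj} > 0$.
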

\begin{proof}\mbox{}\\
For $T = T_M$, if $b_i \leq a_{ij}$ then $\delta^T_{ijk} = \sigma_G(b_i, a_{kj}, b_{k})$. From the formula of $\sigma_G$, see (\ref{eq:sigmaG}), we immediately deduce the equivalence. \\

\noindent For $T = T_P$, if $b_i \leq a_{ij}$ then $\delta^T_{ijk} = \sigma_{GG}(a_{ij}, b_i, a_{kj}, b_{k}) = \max[(b_i-a_{ij})^+, \min(\varphi(a_{ij}, b_i, a_{kj}, b_{k}),(a_{kj}-b_k)^+)] = \min(\varphi(a_{ij}, b_i, a_{kj}, b_{k}),(a_{kj}-b_k)^+)$. Thus, we have:
\[ \delta^T_{ijk} > 0 
\Longleftrightarrow 
 \varphi(a_{ij}, b_i, a_{kj}, b_{k}) > 0 \quad \text{and} \quad a_{kj} > b_k.\]

\noindent We  distinguish two cases when computing $\varphi(a_{ij}, b_i, a_{kj}, b_{k})$: it returns $\frac{(b_i \cdot a_{kj} - a_{ij} \cdot b_k)^+}{a_{ij} + a_{kj}}$  if $a_{ij} > 0$ or it returns $b_i$ if $a_{ij} = 0$ (see its definition in (\ref{eq:varphi})). To have $\varphi(a_{ij}, b_i, a_{kj}, b_{k}) > 0$, we must have $a_{ij} > 0$, as if $a_{ij} = 0$, we have $\varphi(a_{ij}, b_i, a_{kj}, b_{k})=b_i$ which would be equal to zero since we suppose $b_i \leq a_{ij}$. Finally, the inequality $\dfrac{b_i}{a_{ij}} > \dfrac{b_k}{a_{kj}}$ is equivalent to the strict positivity of 
  $\frac{(b_i \cdot a_{kj} - a_{ij} \cdot b_k)^+}{a_{ij} + a_{kj}}  $ (we have $a_{ij} + a_{kj} > 0$).

\noindent For $T = T_L$, we have 
$\delta^T_{ijk} = \sigma_{L }\,(1  - a_{ij}, b_i, a_{kj}, b_k) = \min(b_i, \max( (b_i - a_{ij})^+, \frac{(b_i - a_{ij} + a_{kj} - b_k)^+}{2})).$ 
Since $b_i \leq a_{ij}$, we have $\delta^T_{ijk} = \min(b_i, \frac{(b_i - a_{ij} + a_{kj} - b_k)^+}{2})$ and we immediately deduce the equivalence. 
\end{proof}

From (Notation \ref{notations:prelem}), we give the formula of the Chebyshev distance associated  to the subsystem $(S_R) : A_R \Box_T^{\max} x =  b_R$:
\begin{lemma}\label{Lemma:subsys}
For any non-empty subset $R \subseteq  \{1 , 2 , \dots , n\}$:
 \begin{enumerate}
\item   The Chebyshev distance associated to the subsystem $(S_R)$ is:
\begin{equation}\label{eq:deltaR} \Delta_R = \max_{i\in R} \min_{1 \leq j \leq m} \max_{k\in R} \delta^T_{ijk}.\end{equation}
\item We have the following equivalence:
\[(S_R) : A_R \Box_T^{\max} x =  b_R  \text{ consistent } \quad 
\Longleftrightarrow \quad 
\forall i\in R \, \exists j\in M \,\text{ such that }  
\max_{k\in R} \delta^T_{ijk}  = 0.\]
\end{enumerate}
\end{lemma}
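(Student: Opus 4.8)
The plan is to prove the two parts of (Lemma \ref{Lemma:subsys}) in sequence, with the first part being essentially a direct consequence of the canonical formula (\ref{eq:deltacanonique}) and the second part following from the first part combined with the equivalence (\ref{eq:deltaegalzero}). First I would observe that the subsystem $(S_R)$ is itself a bona fide $\max-T$ system, namely $A_R \Box_T^{\max} x = b_R$, built from the matrix $A_R = [a_{ij}]_{i \in R, 1 \leq j \leq m}$ and the vector $b_R = [b_i]_{i \in R}$. Since (\ref{eq:deltacanonique}) holds for \emph{any} $\max-T$ system, I would apply it verbatim to $(S_R)$. The only thing to check carefully is that the numbers $\delta^T_{ijk}$ computed for the subsystem coincide with those of the full system: the key point is that $\delta^T_{ijk}$, as defined in (\ref{eq:deltaijkT}), depends only on the entries $a_{ij}$, $a_{kj}$, $b_i$ and $b_k$, which are unchanged when we pass to $A_R$ and $b_R$ as long as $i, k \in R$. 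Hence applying the canonical formula to $(S_R)$ yields $\Delta_R = \max_{i \in R} \min_{1 \leq j \leq m} \max_{k \in R} \delta^T_{ijk}$, where the two outer index sets $N$ and the inner index set $N$ of the full formula are both replaced by $R$, giving exactly (\ref{eq:deltaR}).

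For the second part, the strategy is to chain together two equivalences. By (\ref{eq:deltaegalzero}) applied to the subsystem $(S_R)$, we have that $(S_R)$ is consistent if and only if $\Delta_R = 0$. Then I would unwind $\Delta_R = 0$ using the formula (\ref{eq:deltaR}) established in part 1. Since $\Delta_R$ is a maximum over $i \in R$ of the quantities $\delta_i^{(R)} := \min_{1 \leq j \leq m} \max_{k \in R} \delta^T_{ijk}$, and since each $\delta^T_{ijk} \geq 0$ (so each $\delta_i^{(R)} \geq 0$), the maximum vanishes if and only if every term vanishes. That is, $\Delta_R = 0 \Longleftrightarrow \forall i \in R,\ \min_{1 \leq j \leq m} \max_{k \in R} \delta^T_{ijk} = 0$. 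Finally, because a minimum of nonnegative quantities is zero exactly when at least one of them is zero, $\min_{1 \leq j \leq m} \max_{k \in R} \delta^T_{ijk} = 0$ is equivalent to the existence of some $j \in M$ with $\max_{k \in R} \delta^T_{ijk} = 0$. Stringing these together gives precisely the stated equivalence.

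I do not expect a serious obstacle here, since both parts reduce to invoking results already proven in the excerpt (the canonical form (\ref{eq:deltacanonique}) and the zero-distance criterion (\ref{eq:deltaegalzero})) together with elementary facts about when maxima and minima of nonnegative numbers vanish. The one point that requires genuine (if brief) justification is the nonnegativity of $\delta^T_{ijk}$, which underpins the ``max is zero iff every term is zero'' step; this follows immediately from the definitions of $\sigma_G$, $\sigma_{GG}$ and $\sigma_L$ in (\ref{eq:sigmaG}), (\ref{eq:sigmagg}) and (\ref{eq:sigmaL}), each of which is manifestly a composition of $\max$ with $(\cdot)^+$ and hence nonnegative. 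The subtlest conceptual step, rather than a computational one, is recognizing in part 1 that the restriction to $R$ must be applied simultaneously to \emph{both} the outer maximization index $i$ and the inner maximization index $k$ (the one running over the ``competing'' equations in $\sigma$), and that this is legitimate precisely because in the subsystem $(S_R)$ only the rows indexed by $R$ are present.
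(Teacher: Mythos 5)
Your proof is correct and takes essentially the same approach as the paper, which dispatches this lemma in a single line by noting it is easy using (\ref{eq:deltaegalzero}) and (\ref{eq:deltacanonique}) --- exactly the two ingredients you chain together. Your additional checks (that $\delta^T_{ijk}$ depends only on $a_{ij}$, $a_{kj}$, $b_i$, $b_k$ and hence is unchanged under restriction to $R$, and that nonnegativity of the $\delta^T_{ijk}$ justifies the vanishing arguments for the max and min) simply make explicit the details the paper leaves implicit.
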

The proof of this lemma is easy using (\ref{eq:deltaegalzero}) and (\ref{eq:deltacanonique}). \\
Let us illustrate this result with the subsystems reduced to a single equation:
\begin{example}
For all $i\in\{1 , 2 , \dots , n\}$, we have, see (Lemma \ref{lemma:biaijplus}):
\[ (S_{\{i\}}) : A_{\{i\}} \Box_T^{\max} x = b_{\{i\}} \text{ consistent } 
\Longleftrightarrow 
\exists j\in M \text{ such that } \delta^T_{iji}  = (b_i - a_{ij})^+ = 0. \]
\end{example}

\section{\texorpdfstring{Find a maximal consistent subsystem of an inconsistent system of $\max-T$ fuzzy relational equations}{Find a maximal consistent subsystem of an inconsistent system of max-t fuzzy relational equations}}
\label{sec:maximal}

Let us take a non-empty subset $R \subseteq N$ and assume that the subsystem $(S_R) : A_R \Box_T^{\max} x = b_R$ is consistent (Notation \ref{notations:prelem}). For an  index $i$ of an equation which is in the intersection of the set $R$ and the complement $\overline{N_c}$ of the set $N_c$, see (\ref{eq:nc}), we establish four important results ((Lemma \ref{lemma:4}), (Lemma \ref{lemma:5}),  (Lemma \ref{Lemma:6}) and (Proposition \ref{prop:taree})). 

\noindent Finally, we show our main result (Theorem \ref{th:1}), which allows us to obtain a canonical maximal consistent subsystem of an inconsistent system $(S)$ by computing the Chebyshev distance of the  system $(S)$.

\begin{lemma}\label{lemma:4}
Let $i\in R \cap\overline{N_c}$.  Then, we have:
\begin{equation}\label{eq:H1}\forall j \in M, \quad \max_{k\in N} \delta^T_{ijk}  > 0, \end{equation} 
\begin{equation}\label{eq:pasH1R}\exists    j \in M, \quad \max_{k\in R} \delta^T_{ijk}  = 0.\end{equation}
\end{lemma}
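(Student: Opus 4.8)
The plan is to prove the two assertions (\ref{eq:H1}) and (\ref{eq:pasH1R}) separately, since each rests on a different half of the hypothesis on $i$: its membership in $\overline{N_c}$ yields the first, and the consistency of $(S_R)$ yields the second. A preliminary observation I would record once is that every $\delta^T_{ijk}$ is nonnegative; this is immediate from (\ref{eq:deltaijkT}) together with the definitions of $\sigma_G$, $\sigma_{GG}$ and $\sigma_L$, each of which is a $\max$/$\min$ of nonnegative quantities (for instance $\sigma_L(u,x,y,z)=\min(x,\max(\cdots))$ with $x=b_i\geq 0$).

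For (\ref{eq:H1}), I would unfold the definition of $N_c$, see (\ref{eq:nc}), against the canonical form of $\Delta$, see (\ref{eq:deltacanonique}). Since $i\in\overline{N_c}$ we have $\delta_i\neq 0$, and because $\delta_i=\min_{1\leq j\leq m}\max_{1\leq k\leq n}\delta^T_{ijk}$ is a minimum of nonnegative terms it is itself nonnegative, hence $\delta_i>0$. A minimum over $j$ being strictly positive forces each term to dominate it, so for every $j\in M$ we obtain $\max_{1\leq k\leq n}\delta^T_{ijk}=\delta(i,j)\geq\delta_i>0$, which is precisely (\ref{eq:H1}).

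For (\ref{eq:pasH1R}), I would invoke the consistency of $(S_R)$ through (Lemma \ref{Lemma:subsys}). Since $i\in R\cap\overline{N_c}\subseteq R$ and $(S_R)$ is consistent, the second item of (Lemma \ref{Lemma:subsys}), applied to this particular $i$, directly gives the existence of some $j\in M$ with $\max_{k\in R}\delta^T_{ijk}=0$. Equivalently, one can argue from the first item: consistency means $\Delta_R=0$ by (\ref{eq:deltaegalzero}), and since $\Delta_R=\max_{i\in R}\min_{1\leq j\leq m}\max_{k\in R}\delta^T_{ijk}$ is a maximum of nonnegative terms, each term vanishes, producing the claimed $j$.

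There is essentially no hard step here: both parts follow directly from the definitions once nonnegativity of the $\delta^T_{ijk}$ is noted. The only point deserving care is the apparent tension between the two conclusions for the \emph{same} index $i$ — namely $\max_{k\in N}\delta^T_{ijk}>0$ for every $j$, yet $\max_{k\in R}\delta^T_{ijk}=0$ for some $j$. No reconciliation is required within this lemma: restricting the inner maximum from $N$ down to $R$ can simply remove the positive contributions coming from indices $k\in N\setminus R$. Accordingly, the main thing to watch is keeping the index set ($N$ versus $R$) under each $\max$ correct in every line.
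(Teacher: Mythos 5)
Your proof is correct and follows essentially the same route as the paper's (which is just two sentences long): $i\in\overline{N_c}$ gives $\delta_i>0$ and hence (\ref{eq:H1}), while the standing hypothesis that $(S_R)$ is consistent, combined with (Lemma \ref{Lemma:subsys}) and the formula (\ref{eq:deltaR}), gives (\ref{eq:pasH1R}). You correctly identified the consistency of $(S_R)$ as an implicit hypothesis stated in the section preamble rather than in the lemma itself, and your explicit remarks on the nonnegativity of the $\delta^T_{ijk}$ and on the $N$-versus-$R$ index sets only make the paper's terse argument more complete.
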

\begin{proof}
If $i\notin N_c$, it means that $\delta_i = 
\min_{j\in M} \max_{k\in N} \delta^T_{ijk} > 0$, from which we establish  (\ref{eq:H1}). From (\ref{eq:deltaR}), we also deduce (\ref{eq:pasH1R}).
\end{proof}
\begin{lemma}\label{lemma:5}
Let $i\in R \cap\overline{N_c}$ and $j\in M$ and we assume that: 
\begin{equation}\label{eq:nH1}\max_{k\in R} \delta^T_{ijk} = 0.\end{equation}

\noindent Then, we have:
\begin{enumerate}
\item $b_i \leq a_{ij}$.
\item There exists $k_1 \in \overline{R}$ such that $\delta^T_{ijk_1} > 0.$
\end{enumerate}
\end{lemma}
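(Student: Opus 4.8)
The plan is to derive both conclusions directly from the hypothesis (\ref{eq:nH1}), namely $\max_{k\in R}\delta^T_{ijk}=0$, together with the two preceding lemmas. The key preliminary observation is that every $\delta^T_{ijk}$ is non-negative: each of the three defining expressions in (\ref{eq:deltaijkT}) is built out of positive parts $(\cdot)^+$ and minima/maxima of non-negative quantities, so $\delta^T_{ijk}\geq 0$ in all cases. Consequently, the vanishing of the maximum over $R$ forces $\delta^T_{ijk}=0$ for \emph{every} $k\in R$, and this is the engine behind both parts.

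For the first conclusion I would specialise to $k=i$, which is legitimate since $i\in R$. The hypothesis then yields $\delta^T_{iji}=0$, and Lemma \ref{lemma:biaijplus} identifies this quantity as $(b_i-a_{ij})^+$. Hence $(b_i-a_{ij})^+=0$, which is exactly $b_i\leq a_{ij}$.

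For the second conclusion I would bring in the membership $i\in R\cap\overline{N_c}$ through Lemma \ref{lemma:4}: equation (\ref{eq:H1}) gives $\max_{k\in N}\delta^T_{ijk}>0$ for the very index $j$ at hand, so there is at least one $k_1\in N$ with $\delta^T_{ijk_1}>0$. Combining this with the fact, already noted, that $\delta^T_{ijk}=0$ for all $k\in R$, the witness $k_1$ cannot lie in $R$; therefore $k_1\in\overline{R}$, as required. (As a consistency check, $k_1\neq i$, since we showed $\delta^T_{iji}=0$.)

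I do not expect a genuine obstacle here: the statement is essentially a bookkeeping consequence of Lemmas \ref{lemma:biaijplus} and \ref{lemma:4}, with the local hypothesis (\ref{eq:nH1}) doing the rest. The only point deserving care is the non-negativity of the $\delta^T_{ijk}$, which must be read off the three cases of (\ref{eq:deltaijkT}); once that is in hand, both parts follow immediately.
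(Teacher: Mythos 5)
Your proof is correct and takes essentially the same route as the paper's: specialising to $k=i$ in (\ref{eq:nH1}) and invoking Lemma \ref{lemma:biaijplus} for the first conclusion, then applying (\ref{eq:H1}) of Lemma \ref{lemma:4} at the given index $j$ and using (\ref{eq:nH1}) to locate the witness $k_1$ in $\overline{R}$. Your preliminary remark on the non-negativity of the $\delta^T_{ijk}$ is harmless but not needed: $\delta^T_{ijk_1}>0=\max_{k\in R}\delta^T_{ijk}$ already forces $k_1\notin R$.
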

\begin{proof}
Let us take $k = i$ in (\ref{eq:nH1}), then we obtain $\delta^T_{iji} = 0$. From (Lemma \ref{lemma:biaijplus}), we deduce that we have:
\[ \delta^T_{iji}  = (b_i - a_{ij})^+ = 0.\]
So $b_i \leq a_{ij}$.

We apply (\ref{eq:H1}) to the index $j\in M$ and we obtain: 
\[ \max_{k\in N} \delta^T_{ijk} > 0. \]
By taking into account (\ref{eq:nH1}), we then deduce that there exists an index $k_1\in \overline{R}$ such that $\delta^T_{ijk_1} > 0.$
\end{proof}

\begin{lemma}\label{Lemma:6}
Let an index $i\in R \cap\overline{N_c}$ and an index $j\in M$ satisfy (\ref{eq:nH1}). For any  index   $ k_1\in \overline{R}  \cap\overline{N_c}$ such that $\delta^T_{ijk_1} > 0$, there exists an index $k_2 \in \overline{R}$ such that:  
\[ \delta^T_{ijk_1} > 0,  \quad \delta^T_{k_1jk_2} > 0, \quad   \delta^T_{ijk_2} > 0, \quad   k_1 \not= k_2. \] 
\end{lemma}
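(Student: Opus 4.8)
The plan is to exploit the common structure of the three characterizations in Lemma~\ref{Lemma:2}: once $b_i \leq a_{ij}$, the condition $\delta^T_{ijk} > 0$ factors in every case into a side condition involving only $(i,j)$, a side condition of the form $a_{kj} > b_k$ involving only $(k,j)$, and a \emph{strict} comparison between $i$ and $k$ of a quantity attached to column $j$ (namely $b_\bullet$ for $T_M$, the ratio $b_\bullet/a_{\bullet j}$ for $T_P$, and the difference $b_\bullet - a_{\bullet j}$ for $T_L$). Since this comparison is a strict inequality of real numbers it is transitive, and chaining it through $k_1$ is exactly what produces the desired $k_2$.

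First I would invoke Lemma~\ref{lemma:5}: because the pair $(i,j)$ satisfies (\ref{eq:nH1}), we have $b_i \leq a_{ij}$, so the equivalences of Lemma~\ref{Lemma:2} apply to the triple $(i,j,\cdot)$. Reading off $\delta^T_{ijk_1} > 0$ then records the $(i,j)$-side condition ($a_{ij} > 0$ for $T_P$, $b_i > 0$ for $T_L$) together with the strict comparison favouring $i$ over $k_1$. A short computation shows in each case that $a_{k_1 j} > b_{k_1}$: this is read directly from (\ref{eq:M1}) and (\ref{eq:P1}), while for $T_L$ it follows from $b_{k_1} - a_{k_1 j} < b_i - a_{ij} \leq 0$ via (\ref{eq:L1}). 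Hence $b_{k_1} \leq a_{k_1 j}$, so Lemma~\ref{Lemma:2} applies equally well to the triple $(k_1, j, \cdot)$.

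Next, since $k_1 \in \overline{N_c}$ we have $\delta_{k_1} > 0$ by the definition (\ref{eq:nc}), hence $\max_{k \in N}\delta^T_{k_1 j k} > 0$ for our fixed $j$, and I pick $k_2 \in N$ with $\delta^T_{k_1 j k_2} > 0$. Applying Lemma~\ref{Lemma:2} to $(k_1, j, k_2)$ yields the strict comparison favouring $k_1$ over $k_2$ (and, where relevant, $a_{k_2 j} > b_{k_2}$). Chaining the two strict comparisons ($i$ over $k_1$, then $k_1$ over $k_2$) gives the comparison favouring $i$ over $k_2$; combined with the $(i,j)$-side condition already obtained from $\delta^T_{ijk_1} > 0$ (and, where the characterization requires it, with $a_{k_2 j} > b_{k_2}$), Lemma~\ref{Lemma:2} yields $\delta^T_{ijk_2} > 0$. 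It then remains to check $k_1 \neq k_2$ and $k_2 \in \overline{R}$: the former holds because $a_{k_1 j} > b_{k_1}$ forces $\delta^T_{k_1 j k_1} = (b_{k_1} - a_{k_1 j})^+ = 0$ by Lemma~\ref{lemma:biaijplus}, which would contradict $\delta^T_{k_1 j k_2} > 0$ if $k_2 = k_1$; the latter holds because $\delta^T_{ijk_2} > 0$ with $k_2 \in R$ would contradict the hypothesis $\max_{k\in R}\delta^T_{ijk} = 0$ of (\ref{eq:nH1}).

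The main obstacle is purely bookkeeping across the three t-norms, and in particular confirming the auxiliary inequality $a_{k_1 j} > b_{k_1}$ that permits reapplying Lemma~\ref{Lemma:2} at $k_1$: this is immediate for $T_M$ and $T_P$ but must be extracted from the difference inequality for $T_L$. Everything else reduces to transitivity of strict inequality together with the two contradiction arguments, so no idea beyond the uniform reading of Lemma~\ref{Lemma:2} is required.
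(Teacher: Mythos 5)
Your proposal is correct and takes essentially the same route as the paper's proof: Lemma~\ref{lemma:5} gives $b_i \leq a_{ij}$, the hypothesis $k_1\in\overline{N_c}$ (applied with $l=j$) supplies $k_2$ with $\delta^T_{k_1 j k_2} > 0$, and for each of the three t-norms the auxiliary inequality $a_{k_1 j} > b_{k_1}$ licenses a second application of Lemma~\ref{Lemma:2} so that transitivity of the strict comparison yields $\delta^T_{ijk_2} > 0$, whence $k_2\in\overline{R}$ by (\ref{eq:nH1}). Your only (harmless) deviation is establishing $k_1 \neq k_2$ uniformly via Lemma~\ref{lemma:biaijplus} (since $a_{k_1 j} > b_{k_1}$ forces $\delta^T_{k_1 j k_1} = (b_{k_1} - a_{k_1 j})^+ = 0$), whereas the paper reads it off case by case from the strict inequality between the quantities attached to $k_1$ and $k_2$.
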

\begin{proof}
From (Lemma \ref{lemma:5}), we have $b_i \leq a_{ij}$.
An index $k_1\notin N_c$ means that  
$\forall l\in M,\,\, \max_{k\in N} \delta^T_{k_1 l k} > 0$. 

By taking $l = j$,  we obtain $\max_{k\in N} \delta^T_{k_1 j k} > 0.$

Let $k_2\in N$ be such that  $\delta^T_{k_1 j k_2} > 0$.
Let us check for each of the three t-norms that we have:
\[ k_1 \not= k_2, \quad \delta^T_{ijk_2} > 0, \quad  k_2 \in \overline{R}.  \]

\begin{itemize}
    \item For $T =T_M$ (min),  as we have $b_i \leq a_{ij}$, then by applying (\ref{eq:M1}) to $\delta^T_{ijk_1} > 0$, we deduce:
    \[ b_i > b_{k_1} \quad \text{and}\quad a_{k_1j} > b_{k_1}. \]
    The inequalities $a_{k_1j} > b_{k_1}$ and $\delta^T_{k_1jk_2} > 0$ allow us to apply (\ref{eq:M1}) and we obtain:
    \[ b_{k_1} > b_{k_2},  \quad   \quad a_{k_2 j} > b_{k_2}. \]
    So $k_1 \not=k_2$. As we now have:
    \[ b_i > b_{k_1} > b_{k_2},  \quad a_{k_1 j} > b_{k_1}, \quad a_{k_2 j} > b_{k_2}.\]
    and  also $b_i \leq a_{ij}$ , we deduce from 
    (\ref{eq:M1})
that we have $\delta^T_{ijk_2} > 0$.

As, by hypothesis, we have
$\max_{k\in R} \delta^T_{ijk} = 0$,  we necessarily have $k_2\in \overline{R}$.

\item  For $T = T_P$ (product), by applying  (\ref{eq:P1}) to $\delta^T_{ijk_1} > 0$, we obtain:
\[ a_{ij} > 0, \quad a_{k_1 j} > b_{k_1}, \quad \dfrac{b_i}{a_{ij}}   > \dfrac{b_{k_1}}{a_{k_1j}}.  \]
The inequality $a_{k_1 j} > b_{k_1}$ allows us to apply (\ref{eq:P1}) to $\delta^T_{k_1jk_2} > 0$ and to obtain:
\[ a_{k_1 j} > 0, \quad a_{k_2 j} > b_{k_2}, \quad \dfrac{b_{k_1}}{a_{k_1 j}}   > \dfrac{b_{k_2}}{a_{k_2j}}.  \]

So $k_1 \not=k_2$. Furthermore, we deduce from the previous inequalities:
\[a_{ij} > 0, \quad a_{k_2 j} > b_{k_2}, \quad 
\dfrac{b_i}{a_{ij}}   > \dfrac{b_{k_1}}{a_{k_1j}}> \dfrac{b_{k_2}}{a_{k_2j}}.\]
By applying (\ref{eq:P1}) (we   also have $b_i \leq a_{ij}$), we obtain 
$\delta^T_{ijk_2} > 0$. As, by hypothesis, we have 
$\max_{k\in R} \delta^T_{ijk} = 0$,  we necessarily have $k_2\in \overline{R}$.
\item For $T = T_L$ (Łukasiewicz’s t-norm), by applying (\ref{eq:L1}) to  $\delta^T_{ijk_1} > 0$, we obtain:
\[ b_i > 0, \quad b_i - a_{ij} > b_{k_1} -  a_{k_1 j}. 
\]
The inequality $b_{k_1} -  a_{k_1 j} < b_i - a_{ij} \leq 0 $ allows us to apply (\ref{eq:L1}) to $\delta^T_{k_1jk_2} > 0$ and we obtain:
\[ b_{k_1} > 0, \quad   b_{k_1} -  a_{k_1 j} 
> b_{k_2} -  a_{k_2 j}.\]
So $k_1 \not=k_2$.  Furthermore, we deduce from the previous inequalities:
\[ b_i > 0, \quad b_i - a_{ij} > b_{k_1} -  a_{k_1 j} > b_{k_2} -  a_{k_2 j}.\]
By applying (\ref{eq:L1}), we obtain 
$\delta^T_{ijk_2} > 0$. As by hypothesis, we have 
$\max_{k\in R} \delta^T_{ijk} = 0$, we necessarily have $k_2\in \overline{R}$.
\end{itemize}

\end{proof}
The following statement will be very useful to prove our main result:
\begin{proposition}\label{prop:taree}
We assume that $N_c \subseteq R$. Let  $i \in R \cap \overline{N_c}$ and $j \in M$ satisfy (\ref{eq:nH1}). For any integer $p \geq 1$, we can find a set of  indexes $\{k_1 , k_2 \dots , k_p\} \subseteq \overline{R}$ verifying: 
\begin{equation}\label{eq:infini1}\text{card}\,(\{ k_1 , k_2 \dots , k_p \}) = p\,\, \text{ and } \,\, 
\delta^T_{ijk_{1}} > 0,\,\,\delta^T_{ijk_{p}} > 0, \text{ and  for all }\,\,l \in \{1,2,\dots,p-1\},\,\, \delta^T_{k_{l}jk_{l+1}} > 0.
\end{equation}

\end{proposition}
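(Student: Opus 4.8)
The plan is to argue by induction on $p$, building the chain one index at a time by repeatedly invoking (Lemma \ref{Lemma:6}). The hypothesis $N_c \subseteq R$ is what makes the whole iteration go through: it gives $\overline{R} \subseteq \overline{N_c}$, so every index we ever produce inside $\overline{R}$ lies automatically in $\overline{N_c}$, and hence still satisfies the defining property of $\overline{N_c}$ (namely that $\max_{k\in N}\delta^T_{k l k'} > 0$ for every column $l$). This is precisely the hypothesis $k_1 \in \overline{R} \cap \overline{N_c}$ required to re-apply (Lemma \ref{Lemma:6}). For the base case $p = 1$, since $i \in R \cap \overline{N_c}$ and $j$ satisfies (\ref{eq:nH1}), (Lemma \ref{lemma:5}) directly furnishes an index $k_1 \in \overline{R}$ with $\delta^T_{ijk_1} > 0$, and there is no chain condition to check.

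For the inductive step, suppose we have produced distinct indexes $k_1, \dots, k_p \in \overline{R}$ satisfying (\ref{eq:infini1}) together with $\delta^T_{ijk_l} > 0$ for every $l$. Since $k_p \in \overline{R} \subseteq \overline{N_c}$ and $\delta^T_{ijk_p} > 0$, I would apply (Lemma \ref{Lemma:6}) with $k_p$ playing the role of its index $k_1$: this yields an index $k_{p+1} \in \overline{R}$ with $\delta^T_{k_p j k_{p+1}} > 0$, $\delta^T_{ijk_{p+1}} > 0$ and $k_p \neq k_{p+1}$. Appending $k_{p+1}$ preserves all the relations required in (\ref{eq:infini1}); the membership $k_{p+1} \in \overline{R}$ is already guaranteed by (Lemma \ref{Lemma:6}) (equivalently, by (\ref{eq:nH1}) any index $k$ with $\delta^T_{ijk} > 0$ cannot lie in $R$).

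The main obstacle is the full distinctness requirement $\mathrm{card}(\{k_1,\dots,k_{p+1}\}) = p+1$: (Lemma \ref{Lemma:6}) only certifies that consecutive indexes differ ($k_p \neq k_{p+1}$), not that $k_{p+1}$ avoids all the earlier $k_l$. To close this gap I would strengthen the induction hypothesis by carrying along a scalar quantity $q$ that strictly decreases through the chain. Inspecting the equivalences of (Lemma \ref{Lemma:2}) shows the natural choice for each t-norm: $q(k_l) = b_{k_l}$ for $T_M$ (by (\ref{eq:M1})), the ratio $q(k_l) = b_{k_l}/a_{k_l j}$ for $T_P$ (by (\ref{eq:P1})), and the difference $q(k_l) = b_{k_l} - a_{k_l j}$ for $T_L$ (by (\ref{eq:L1})). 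In each case the chain relations $\delta^T_{k_l j k_{l+1}} > 0$, read through the appropriate equivalence (using $b_{k_l} \leq a_{k_l j}$, which the same equivalences provide as a by-product at every step, together with $b_i \leq a_{ij}$ coming from (Lemma \ref{lemma:5})), force $q(i) > q(k_1) > q(k_2) > \cdots > q(k_{p+1})$, a strictly decreasing sequence. Strict monotonicity immediately makes the values $q(k_1), \dots, q(k_{p+1})$ pairwise distinct, hence the indexes themselves pairwise distinct, completing the induction and establishing (\ref{eq:infini1}) for every integer $p \geq 1$.
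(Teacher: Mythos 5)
Your proof is correct and takes essentially the same route as the paper's: both argue by induction on $p$, using (Lemma \ref{lemma:5}) for the base case, (Lemma \ref{Lemma:6}) (the paper instead re-derives the next index directly from $k_{p-1}\in\overline{N_c}$ plus (\ref{eq:nH1})) for the extension step, and exactly the same strictly decreasing per-t-norm quantities $b_{k_l}$, $b_{k_l}/a_{k_l j}$ and $b_{k_l}-a_{k_l j}$ (the paper's chains (\ref{eq:prov:bibk}), (\ref{eq:prov:bibkTP}), (\ref{eq:prov:bibkTL})) to force $\mathrm{card}(\{k_1,\dots,k_p\})=p$ and $\delta^T_{ijk_p}>0$. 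Your observation that the hypotheses needed to re-apply the equivalences of (Lemma \ref{Lemma:2}) at each link (e.g.\ $b_{k_l}\leq a_{k_l j}$) are produced as by-products of the previous link matches the paper's argument, so there is no gap.
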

\begin{proof}
We will prove this result by induction on $p$. \\For $p = 1$, this follows immediately from (Lemma \ref{lemma:5}). \\
For $p = 2$, let us take an index $k_1 \in \overline{R}$ satisfying $\delta_{ijk_{1}}^T > 0$ and as $\overline{R} \subseteq \overline{N_c}$, we deduce, by applying (Lemma \ref{Lemma:6}), an index $k_2 \in \overline{R}$ such that the set $\{k_1,k_2\}$ satisfy  (\ref{eq:infini1}). 

Assume that  we have constructed a set $\{k_1,k_2,\dots,k_{p-1}\} \subseteq \overline{R}$ such that: $$\text{card}(\{k_1,k_2,\dots,k_{p-1}\}) = p-1 \text{ and } \delta^T_{ijk_{1}} > 0,\,\,\delta^T_{ijk_{p-1}} > 0, \text{ and  for all }\,\,l \in \{1,2,\dots,p-2\},\,\, \delta^T_{k_{l}jk_{l+1}} > 0.$$
Let us prove the existence of an index $k_p$ such that the set $\{k_1,k_2,\dots,k_{p-1},k_p\}$ satisfy (\ref{eq:infini1}). We have to prove: \[k_p \in \overline{R}, \,\,
\text{card}\,(\{ k_1 , k_2 \dots , k_p \}) = p\,\, \text{ and } \,\,\delta_{k_{p-1}jk_p}^T > 0 \,\, \text{ and } \,\,  \delta_{ijk_p}^T > 0.\] 
As we have $k_{p-1} \in \overline{R}$ and $\overline{R} \subseteq \overline{N_c}$ we have: 
\[ \delta_{k_{p-1}} = \min_{l \in M} \max_{h \in N} \delta_{k_{p-1}lh}^T > 0.\]
For $l = j$, we deduce:
\[ \max_{h \in N} \delta_{k_{p-1}jh}^T > 0.\]
Let us take an index $k_p \in N$ such that $\delta_{k_{p-1}jk_p}^T > 0$. \\
Let us remark that if we prove that $\delta_{ijk_p}^T > 0$, by (\ref{eq:nH1}),  we conclude that $k_p \in \overline{R}$.\\
Let us check for each of the three t-norms the remaining conditions: $\text{card}\,(\{ k_1 , k_2 \dots , k_p \}) = p$ and  $\delta_{ijk_p}^T > 0$. 
\begin{itemize}
    \item For $T = T_M$ (min), by the recurrence hypothesis, we apply $p -1$ times the equivalence  (\ref{eq:M1}) and we obtain:
    \begin{equation}\label{eq:prov:bibk}
    b_i > b_{k_1} > b_{k_2} > \dots > b_{k_{p-1}} \text{ and } a_{k_{p-1}j} > b_{k_{p-1}}. \end{equation}
    As we have $b_{k_{p-1}} < a_{k_{p-1}j}$, we can apply (\ref{eq:M1}) to $\delta_{k_{p-1}jk_p}^{T_M} > 0$ and we obtain: 
    \begin{equation}
        b_{k_{p-1}} > b_{k_{p}} \text{ and } a_{k_{p}j} >b_{k_{p}}.  
    \end{equation}
    Then, from (\ref{eq:prov:bibk}) we deduce $b_i > b_{k_1} > b_{k_2} > \dots > b_{k_{p-1}}> b_{k_{p}}$ and $a_{k_{p}j} >b_{k_{p}}$.\\
    So $\text{card}\,(\{ k_1 , k_2 \dots , k_p \}) = p$.  
    As we have $b_i \leq a_{ij}$ (Lemma \ref{lemma:5}), we conclude that $\delta_{ijk_p}^{T_M} > 0$.

    \item For $T = T_P$ (product),  by the recurrence hypothesis, we apply $p - 1$ times the equivalence  (\ref{eq:P1}) and we obtain:
    \begin{equation}\label{eq:prov:bibkTP}
    \frac{b_i}{a_{ij}} > \frac{b_{k_1}}{a_{k_1j}}  > \dots > \frac{b_{k_{p-1}}}{a_{k_{p-1}j}}  \text{ and } a_{ij} > 0, \text{ for all } l \in \{1,2,\dots,p-2\}, a_{k_lj} > 0, \text{ and }  a_{k_{p-1}j} > b_{k_{p-1}}. \end{equation}
    As we have $b_{k_{p-1}} < a_{k_{p-1}j}$, we can apply (\ref{eq:P1}) to $\delta_{k_{p-1}jk_p}^{T_P} > 0$ and we obtain: 
    \begin{equation}
        \frac{b_{k_{p-1}}}{a_{k_{p-1}j}} > \frac{b_{k_{p}}}{a_{k_{p}j}} \text{ and } a_{k_{p}j} >b_{k_{p}}.  
    \end{equation}
    Then, from (\ref{eq:prov:bibkTP}) we deduce $\frac{b_i}{a_{ij}} > \frac{b_{k_1}}{a_{k_1j}} > \frac{b_{k_2}}{a_{k_2j}} > \dots > \frac{b_{k_{p-1}}}{a_{k_{p-1}j}}> \frac{b_{k_{p}}}{a_{k_{p}j}}$ and $a_{k_{p}j} >b_{k_{p}}$.\\
    So $\text{card}\,(\{ k_1 , k_2 \dots , k_p \}) = p$.  
    As we have $b_i \leq a_{ij}$ (Lemma \ref{lemma:5}), we conclude that $\delta_{ijk_p}^{T_P} > 0$. 
    \item For $T = T_L$ (Łukasiewicz’s t-norm), by the recurrence hypothesis, we apply $p -1$ times the equivalence  (\ref{eq:L1}) and we obtain:
    \begin{equation}\label{eq:prov:bibkTL}
    {b_i}-{a_{ij}} > {b_{k_1}}-{a_{k_1j}}  > \dots > {b_{k_{p-1}}}-{a_{k_{p-1}j}}  \text{ and } b_i > 0, \text{ for all } l \in \{1,2,\dots,p-2\}, b_{k_l} > 0. \end{equation}
    As we have $b_{k_{p-1}}-{a_{k_{p-1}j}} < {b_i}-{a_{ij}} \leq 0$, we can apply (\ref{eq:L1}) to $\delta_{k_{p-1}jk_p}^{T_L} > 0$ and we obtain: 
    \begin{equation}
        {b_{k_{p-1}}}-{a_{k_{p-1}j}} > {b_{k_{p}}}-{a_{k_{p}j}}.  
    \end{equation}
    Then, from (\ref{eq:prov:bibkTL}) we deduce ${b_i}-{a_{ij}} > {b_{k_1}}-{a_{k_1j}} > {b_{k_2}}-{a_{k_2j}} > \dots > {b_{k_{p-1}}}-{a_{k_{p-1}j}}> {b_{k_{p}}}-{a_{k_{p}j}}$ and $b_i > 0$. \\So $\text{card}\,(\{ k_1 , k_2 \dots , k_p \}) = p$ and  by (\ref{eq:L1}) we have $\delta_{ijk_p}^{T_L} > 0$. \\
\end{itemize}
\end{proof}

We now prove our main result:
\begin{theorem}\label{th:1}
Assume that there is at least one equation of index $i$ that is solvable independently of the others, i.e.,  there is a subsystem reduced to a single equation $(S_{\{ i \}})$  that is consistent. 
Then, we have:
\begin{enumerate}
\item The set $N_c$ is non-empty ($N_c \not= \emptyset$).
\item The system $(S_{N_c}) : A_{N_c} \Box_T^{\max} x =  b_{N_c}$  is consistent.
\item 
Among the consistent subsystems 
$(S_R) : A_R \Box_T^{\max} x =  b_R$ of the system $(S) : A \Box_T^{\max} x = b$ with $R \subseteq \{1,2,\dots,n\}$,  
the consistent subsystem $(S_{N_c}) : A_{N_c} \Box_T^{\max} x =  b_{N_c}$  is maximal in the following sense:   any subsystem defined by a strict superset of $N_c$ is inconsistent. 
\end{enumerate}
\end{theorem}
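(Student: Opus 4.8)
The plan is to treat the three assertions in turn, using (Lemma \ref{Lemma:subsys}) as the consistency test and (Proposition \ref{prop:taree}) as the engine behind two contradiction arguments, with the finiteness of $N$ (the fact that $\text{card}(N) = n$) supplying the contradiction each time. I would start with the second assertion, which is self-contained. By the definition (\ref{eq:nc}), every $i \in N_c$ has $\delta_i = \min_{j \in M}\max_{k \in N}\delta^T_{ijk} = 0$, so some column $j$ gives $\max_{k \in N}\delta^T_{ijk} = 0$; since $N_c \subseteq N$ the same $j$ yields $\max_{k \in N_c}\delta^T_{ijk} = 0$. Hence the criterion of (Lemma \ref{Lemma:subsys}) is met for $R = N_c$, and $(S_{N_c})$ is consistent. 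This step needs no new ingredient.

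For the first assertion I would argue by contradiction, assuming $N_c = \emptyset$, so that $\overline{N_c} = N$. By hypothesis there is an index $i_0$ with $(S_{\{i_0\}})$ consistent; put $R = \{i_0\}$. Then $N_c = \emptyset \subseteq R$ and $i_0 \in R \cap \overline{N_c}$, and (Lemma \ref{Lemma:subsys}) supplies a column $j$ with $\max_{k \in R}\delta^T_{i_0 j k} = 0$, that is, condition (\ref{eq:nH1}). Now (Proposition \ref{prop:taree}) produces, for every $p \geq 1$, a set of $p$ distinct indices contained in $\overline{R} \subseteq N$; choosing $p = n+1$ contradicts $\text{card}(N) = n$. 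Therefore $N_c \neq \emptyset$.

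The third assertion follows the identical template. Suppose some $R$ with $N_c \subsetneq R$ gives a consistent $(S_R)$. The strict inclusion forces an index $i \in R \cap \overline{N_c}$, and consistency of $(S_R)$ via (Lemma \ref{Lemma:subsys}) yields a column $j$ satisfying (\ref{eq:nH1}). Since $N_c \subseteq R$, all hypotheses of (Proposition \ref{prop:taree}) hold, so once more we exhibit $n+1$ distinct indices in $\overline{R}$, contradicting $\text{card}(N) = n$. Hence every strict superset of $N_c$ indexes an inconsistent subsystem, which is precisely the claimed maximality.

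The substantive difficulty has already been absorbed into (Proposition \ref{prop:taree}), whose proof performs the delicate chain construction by repeatedly invoking the characterizations of $\delta^T_{ijk} > 0$ in (Lemma \ref{Lemma:2}). The only point demanding attention here is checking, in each contradiction, that the three hypotheses of that proposition genuinely hold — namely $N_c \subseteq R$, $i \in R \cap \overline{N_c}$, and (\ref{eq:nH1}); once they do, the pigeonhole collision against $\text{card}(N) = n$ closes both arguments at once, so I anticipate no obstacle beyond this bookkeeping.
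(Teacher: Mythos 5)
Your proof is correct and follows essentially the same route as the paper: the same verification of consistency of $(S_{N_c})$ via (Lemma \ref{Lemma:subsys}), and the same pigeonhole contradiction drawn from (Proposition \ref{prop:taree}) once the three hypotheses $N_c \subseteq R$, $i \in R \cap \overline{N_c}$ and (\ref{eq:nH1}) are checked. The only differences are organizational and immaterial — you prove the first and third assertions as two separate contradiction arguments where the paper treats them simultaneously (deriving $N_c \neq \emptyset$ by specializing $R = \{i\}$), and you extract the column $j$ directly from (Lemma \ref{Lemma:subsys}) rather than through (Lemma \ref{lemma:4}).
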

The third statement of the above theorem means that for any non-empty subset $R \subseteq \{1 , 2 , \dots  , n\}$, we have:
\[ \text{ The system }  (S_R) \text{ is consistent and }  N_c \subseteq R  
\Longrightarrow 
N_c = R.\]
We begin by proving the second statement, and we will then prove the first and third statements {\it simultaneously}.
\begin{proof}\mbox{}\\
$\bullet$ Proof of the second  statement (assuming that $N_c \not= \emptyset$).

To prove that the subsystem $(S_{N_c})$
is consistent, we must show that 
$\Delta_{N_c} = 0$, see (Lemma \ref{Lemma:subsys}),  which means that for any $i\in N_c$, there is an index $j\in M$ such that $\max_{k\in N_c} \delta^T_{ijk} = 0$.

Let an index $i\in N_c$. We have:
\[   \delta_i =  \min_{j\in M} \max_{k\in N} \delta^T_{ijk} = 0.\]
We can easily deduce that there is an index $j \in M$ such that:
\[0 = \max_{k\in N} \delta^T_{ijk} \geq  \max_{k\in N_c} \delta^T_{ijk}. \]

$\bullet$ Proof of the first and third  statement.

Let $R \subseteq N$ be a non-empty set  such that  $\text{the system } (S_R) \text{ is consistent and } N_c \subseteq R$. We will show  the equality  $N_c = R$.

{\it In particular, if we take $R = \{ i \}$ and we suppose that the subsystem $(S_R)$ is consistent, then    we trivially have $\emptyset \subset R$, thus $N_c \not=\emptyset$.}

Let us show that $N_c = R$  by contradiction.

Assume that we have $N_c \subset R$. Let an index $i\in R \cap \overline{N_c}$. 

By (Lemma \ref{lemma:4}), we can take an index $j\in M$ verifying  (\ref{eq:nH1}):
\[   \max_{k\in R} \delta^T_{ijk} = 0.\]

Then by (Proposition \ref{prop:taree}),  we conclude that for any integer $p \geq 1$, we have $p \leq \text{card}({\overline{R}})$, which is a contradiction.

We have proven that $N_c = R$.
\end{proof}

Consequently, the computational complexity for obtaining the maximal consistent subsystem $(S_{N_c})$ of an inconsistent system $(S)$ is the same as that for computing the Chebyshev distance $\Delta$.

\begin{example}
(continued)  We reuse the matrix $A$ and the vector $b$ of (Example \ref{ex:1}), see (\ref{eq:Abofpedrycz}).

\begin{itemize}
    \item For the $\max-\min$ system $A \Box_{\min}^{\max} x = b$, its unique maximal consistent subsystem is the system itself, since the system $A \Box_{\min}^{\max} x = b$ is consistent and therefore $N_c = \{1,2,3,4\}$.

    \item For the inconsistent max-product system $A \Box_{T_P}^{\max} x = b$ and the inconsistent max-Łukasiewicz system $A \Box_{T_L}^{\max} x = b$, we have $N_c = \{1,3,4\}$. So the system $(S_{N_c}) : A_{N_c} \Box_{T_P}^{\max} x = b_{N_c}$ is a  maximal consistent subsystem of the system $A \Box_{T_P}^{\max} x = b$ and the system $(S_{N_c}): A_{N_c} \Box_{T_L}^{\max} x = b_{N_c}$ is a maximal consistent subsystem of the system $A \Box_{T_L}^{\max} x = b$. For each inconsistent system, we obtain a consistent subsystem of it by removing the equation whose index is $2$.  
\end{itemize}
\end{example}

\section{\texorpdfstring{Method for easily finding all consistent subsystems of an inconsistent $\max-\min$ system}{Method for easily finding all consistent subsystems of an inconsistent max-min system}}
\label{sec:findingminmaxCS}

In this section, we give an efficient method for obtaining all consistent subsystems of an inconsistent $\max-\min$ system. We begin by giving some notations and establishing  (Lemma \ref{lemma:deltaRchapeau}). Based on this result, we introduce our method (Subsection \ref{subsec:method}) which allows us to obtain, iteratively, all  the maximal consistent subsystems of an inconsistent system (Proposition \ref{prop:mcsprop}). Finally, we illustrate our method by an interesting example.

\subsection{Notations}

Let $(S) : A \Box_{\min}^{\max} x = b$ be a $\max-\min$ system where $A = [a_{ij}]_{{}1 \leq i \leq n,1 \leq j \leq m} $ and $b = [b_{i}]_{{}1 \leq i \leq n}$. We suppose that the coefficients of $b$ are ordered in ascending order, i.e., we have:
\begin{equation}\label{eq:reorder}
    b_1 \leq b_2 \leq \dots \leq b_n.
\end{equation}

We reuse (Notation \ref{notations:prelem}) i.e., 
for any subset $R \subseteq \{1,2,\dots, n \}$, we form the following $\max-\min$ subsystem: $(S_R) : A_R \Box_{\min}^{\max} x =  b_R,$  where 
$A_R =  [a_{ij}]_{i\in R, 1 \leq j \leq m} \text{ and }   b_R = [b_{i}]_{i\in R}$.

We compute the Chebyshev distance associated to the subsystem $(S_R)$ by (see (Lemma \ref{Lemma:subsys})): 
\begin{equation}\label{eq:deltaR2}
    \Delta_R = \max_{i \in R} \delta^R_i \quad \text{ where } \quad \delta_i^R = \min_{1 \leq j \leq m} \max_{k \in R} \delta^{T_M}_{ijk}.
\end{equation} 
We assume that each of the equations in the system $(S)$ is solvable independently of the others, which means that all subsystems reduced to one equation are consistent:
\begin{equation}\label{eq:alluniqeqcons}
    \text{for all } i \in N,  \text{ the system } (S_{\{i\}}) : A_{\{i\}} \Box_{\min}^{\max} x = b_i \text{ is consistent.}
\end{equation} 
For all $s \in N$, we associate to the subsystem $(S_{\{1,2,\dots,s\}})$, the following set:
\begin{equation}\label{Plus1}
\mathcal{E}^s = \bigg\{R \in 2^{\{1 , 2 , \dots  , s\}}\,\mid \,
(S_R) : A_R \Box_{\min}^{\max} x = b_R \quad \text{is consistent}\bigg\}.
\end{equation}

The set $\mathcal{E}^s$ is non-empty since, from (\ref{eq:alluniqeqcons}), all singletons $\{1\}, \{2\}, \cdots, \{ s\}$ are in $\mathcal{E} ^s$.

\subsection{Preliminary result}  

The following result allows us  to construct, from a consistent subsystem of $(S_{\{1,2,\dots,s\}})$ a consistent subsystem with a larger number of equations, if it is possible.\\  

\begin{lemma}\label{lemma:deltaRchapeau}
Let $1 \leq s   < n - 1$ be fixed and consider a subset of equation indexes $R\in  {\cal E}^s$ which allows us to form a consistent subsystem of $(S_{\{1,2,\dots,s\}})$. For any $k\in N$ such that $k > s$, we put the following subset $\widehat R_k =  R \cup \{ k\}$ and we have (using the formula (\ref{eq:deltaR2})):
\begin{enumerate}
\item $\forall i \in R \,,\, \delta^{\widehat R_k}_i = 0$, 
\item The subsystem  $(S_{\widehat R_k})$ is a   consistent subsystem
$\Longleftrightarrow
\delta^{\widehat R_k}_{k} = 0.
$
\end{enumerate}
\end{lemma}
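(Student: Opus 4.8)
The plan is to prove the two statements in order, the first being the substantive one and the second an immediate consequence.

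For statement 1, I would start from the consistency of $(S_R)$. By (Lemma \ref{Lemma:subsys}) this gives $\delta_i^R = 0$ for every $i \in R$, i.e.\ for each such $i$ there is a column $j \in M$ with $\max_{h \in R} \delta_{ijh}^{T_M} = 0$. Taking $h = i$ in this maximum (which is legitimate since $i \in R$) and invoking (Lemma \ref{lemma:biaijplus}), I get $\delta_{iji}^{T_M} = (b_i - a_{ij})^+ = 0$, hence $b_i \leq a_{ij}$. The key step is then to evaluate the single new term $\delta_{ijk}^{T_M}$ created by enlarging $R$ to $\widehat{R}_k = R \cup \{k\}$. Since $i \in R \subseteq \{1,\dots,s\}$ while $k > s$, the ascending ordering (\ref{eq:reorder}) yields $b_i \leq b_k$, so $(b_i - b_k)^+ = 0$ and therefore $\sigma_G(b_i, a_{kj}, b_k) = 0$ by (\ref{eq:sigmaG}). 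Consequently $\delta_{ijk}^{T_M} = \max[(b_i - a_{ij})^+,\,0] = (b_i - a_{ij})^+ = 0$. For this same column $j$ we then have $\max_{h \in \widehat{R}_k} \delta_{ijh}^{T_M} = \max(\max_{h \in R} \delta_{ijh}^{T_M},\, \delta_{ijk}^{T_M}) = 0$, which forces $\delta_i^{\widehat{R}_k} = 0$. As $i \in R$ was arbitrary, statement 1 follows.

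For statement 2, I would simply combine statement 1 with the consistency criterion of (Lemma \ref{Lemma:subsys}) applied to $\widehat{R}_k$: the subsystem $(S_{\widehat{R}_k})$ is consistent iff $\delta_i^{\widehat{R}_k} = 0$ for all $i \in \widehat{R}_k = R \cup \{k\}$. Statement 1 already disposes of every index $i \in R$, so the whole condition collapses to the single requirement $\delta_k^{\widehat{R}_k} = 0$, which is exactly the claimed equivalence.

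The only real obstacle is statement 1, and within it the crucial ingredient is the ascending ordering of $b$: it is precisely the inequality $b_i \leq b_k$ that annihilates the $\sigma_G$ contribution of the newly added row and lets the witness column $j$ that already worked for $(S_R)$ continue to work for $(S_{\widehat{R}_k})$. Everything else is a routine unwinding of the definitions of $\delta_i^R$ and $\sigma_G$ together with the characterizations already established in (Lemma \ref{lemma:biaijplus}) and (Lemma \ref{Lemma:subsys}). I would emphasize that this argument is specific to the $\max-\min$ specialization, since it uses the explicit form of $\delta_{ijk}^{T_M}$ and of $\sigma_G$; it does not, as stated, transfer to the product or \L ukasiewicz cases.
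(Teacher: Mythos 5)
Your proof is correct and follows essentially the same route as the paper's: the same witness column $j$ coming from the consistency of $(S_R)$, the same use of (Lemma \ref{lemma:biaijplus}) to get $(b_i-a_{ij})^+=0$, and the same key observation that the ascending ordering gives $b_i \leq b_k$ and hence $\sigma_G(b_i,a_{kj},b_k)=0$, so the single new term $\delta^{T_M}_{ijk}$ vanishes and $\delta^{\widehat R_k}_i=0$. The paper likewise reduces statement 2 to statement 1 via (Lemma \ref{Lemma:subsys}), so there is nothing to correct.
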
 

\begin{proof}
It suffices to prove the first statement.

Let $i\in R$, we have $i < k$, so  
$b_i \leq b_{k}$ and  for all $l\in M$, we have:
\[\sigma_G(b_i, a_{k
l}, b_{k}) = \min(\dfrac{(b_i - b_k)^+}{2} , (a_{kl} - b_k)^+) = 0. \]
Since $\Delta_R = 0$, let $j\in M$  be such that  $\max_{k'\in R} \delta^{T_M}_{ijk'} = 0$. \\
From $k' = i \in R$ and (Lemma \ref{lemma:biaijplus}), we deduce $0 = \delta^{T_M}_{iji} = (b_i - a_{ij})^+$.\\ From (\ref{eq:deltaijkT}) and   the inequality $b_i \leq b_k$, we obtain:
\[ 
 \delta^{T_M}_{ijk} =  \sigma_G(b_i , a_{k j} , b_{k}) = 0.\]
Finally, we obtain from (\ref{eq:deltaR2}):
 \begin{align}
\delta^{\widehat R_k}_i & \leq \max_{k'\in \widehat R_k} \delta^{T_M}_{ijk'} \nonumber\\
& = \max(\max_{k'\in R} \delta^{T_M}_{ijk'}, \delta^{T_M}_{ijk})\nonumber\\
 & = 0.\nonumber\\
 \end{align}  
\end{proof}

\subsection{\texorpdfstring{Method for finding all consistent subsystems of an inconsistent $\max-\min$ system}{Method for finding all consistent subsystems of an inconsistent max-min system}}
\label{subsec:method}

We will now introduce our method, which allows us to build for $1 \leq s \leq n - 1$ the set $\mathcal{E}^{s+1}$ from the set $\mathcal{E}^{s}$. 
Trivially, we have $\mathcal{E}^1 = \bigg\{\{1\}\bigg\}$. \\For $1 \leq s \leq n - 1$ the set $\mathcal{E}^{s+1}$ is given by:
\begin{equation}\label{eq:esplus1toes}
     {\cal E}^{s + 1} = {\cal E}^{s} \cup \bigg\{ \{ s + 1 \} \bigg\} \cup \bigg\{  \widehat R  \mid  R  \in {\cal E}^{s}, \widehat R = R \cup \{ s + 1 \},  \text{ and } \delta^{\widehat R}_{s+1} = 0\bigg\}.
\end{equation}
The equality (\ref{eq:esplus1toes}) is easily deduced from (Lemma \ref{lemma:deltaRchapeau}).\\ 
Algorithmically, the main step to deduce ${\cal E}^{s + 1}$  from  ${\cal E}^{s}$ consists in   computing the $\text{card}({\cal E}^{s})$ numbers   $\delta^{\widehat R}_{s+1} = \min_{1 \leq j \leq m} \max_{k \in \widehat R} \delta^{T_M}_{s+1jk}$. From the set $\mathcal{E}^n$ we  get  all the consistent subsystems of the  system $(S)$.

\noindent
Starting from a consistent subsystem $(S_R)$ of the system $(S_{\{1 ,   \dots , s\}})$ that {\it we suppose to be maximal}  among the consistent subsystems   of the system $(S_{\{1 ,   \dots , s\}})$, we characterize in which case 
$(S_R)$ is a maximal consistent subsystem  of the whole system  $(S)$.
\begin{proposition}\label{prop:mcsprop}
Let $1 \leq s   < n - 1$ and $R\subseteq \{1 , \dots , s\}$. We suppose that the subsystem  
$(S_R)$ is a maximal consistent subsystem of $(S_{\{1 ,   \dots , s\}})$. Then we have: 
\begin{equation}
(S_R) \,\, \text{is a  maximal consitent subsystem of the system} \,\, (S) 
\Longleftrightarrow 
 \forall k\in \{s+1 , \dots  , n\},\,\,  
 \delta^{\widehat R_k}_k > 0.
\end{equation}
\end{proposition}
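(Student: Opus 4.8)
The plan is to prove both implications by combining Lemma \ref{lemma:deltaRchapeau} with the elementary monotonicity of the quantity $\delta_i^{R}$ from (\ref{eq:deltaR2}) in the index set $R$: for a fixed row $i$ and any $R \subseteq R'$ with $i \in R$, each inner maximum $\max_{k\in R}\delta^{T_M}_{ijk}$ is dominated by $\max_{k\in R'}\delta^{T_M}_{ijk}$, so taking $\min_j$ yields $\delta_i^{R} \leq \delta_i^{R'}$. This set-monotonicity, together with the consistency characterization of Lemma \ref{Lemma:subsys}, is the only machinery needed.

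For the forward implication I would argue by contraposition. Suppose some $k\in\{s+1,\dots,n\}$ satisfies $\delta_k^{\widehat R_k} = 0$. Since $R$ is consistent (being a maximal consistent subsystem of $(S_{\{1,\dots,s\}})$), the second statement of Lemma \ref{lemma:deltaRchapeau} gives that $(S_{\widehat R_k})$ is consistent. Because $k > s$ while $R \subseteq \{1,\dots,s\}$, we have $k\notin R$, so $\widehat R_k = R\cup\{k\}$ is a strict superset of $R$ whose subsystem is consistent; this contradicts the maximality of $(S_R)$ inside $(S)$. Hence $\delta_k^{\widehat R_k} > 0$ for every such $k$.

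For the reverse implication, assume $\delta_k^{\widehat R_k} > 0$ for all $k\in\{s+1,\dots,n\}$. The subsystem $(S_R)$ is already consistent by hypothesis, so it remains to establish maximality in $(S)$, i.e.\ that every $R'$ with $R\subsetneq R'\subseteq\{1,\dots,n\}$ gives an inconsistent subsystem. I would split on whether $R'$ stays inside $\{1,\dots,s\}$. If $R'\subseteq\{1,\dots,s\}$, inconsistency is immediate from the assumed maximality of $(S_R)$ within $(S_{\{1,\dots,s\}})$. Otherwise $R'$ contains some index $k\in\{s+1,\dots,n\}$; then $\widehat R_k = R\cup\{k\}\subseteq R'$, and applying the monotonicity above to the fixed row $k\in R'$ gives $\delta_k^{R'} \geq \delta_k^{\widehat R_k} > 0$. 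By Lemma \ref{Lemma:subsys}, a single strictly positive $\delta_k^{R'}$ forces $\Delta_{R'} > 0$, so $(S_{R'})$ is inconsistent.

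The argument is short once these two ingredients are in place, so the care goes almost entirely into the reverse direction: one must check that the two cases (remaining inside $\{1,\dots,s\}$ versus reaching beyond $s$) exhaust all strict supersets $R'$ of $R$, and that the monotonicity step is applied to a row index $k$ that genuinely lies in $R'$ rather than merely in $\widehat R_k$. I do not expect a real obstacle beyond this bookkeeping, since the proposition is essentially a repackaging of Lemma \ref{lemma:deltaRchapeau} and the set-monotonicity of $\delta_i^{R}$.
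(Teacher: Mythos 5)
Your proof is correct and takes essentially the same route as the paper: the forward direction is the paper's argument in contrapositive form, and the reverse direction uses the same case split on whether the strict superset of $R$ stays inside $\{1,\dots,s\}$ or reaches an index $k>s$. The only (harmless) difference is in the second case, where the paper deduces consistency of $(S_{\widehat R_k})$ from that of the larger consistent subsystem and then invokes the second statement of Lemma \ref{lemma:deltaRchapeau} to get a contradiction, while you argue directly via the monotonicity $\delta_k^{\widehat R_k} \leq \delta_k^{R'}$ (valid, since for each $j$ the inner maximum over $\widehat R_k \subseteq R'$ is dominated by that over $R'$) combined with Lemma \ref{Lemma:subsys} --- a quantitative restatement of the fact that consistency is inherited by subsystems.
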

\begin{proof}
The proof of the implication $\Longrightarrow$ follows directly from (Lemma \ref{lemma:deltaRchapeau}). In fact, for any $k \in \{s+1 , \dots  , n\}$ we have 
$R \subset \widehat R_k = R \cup \{ k \}$. The  maximality of the subsystem $(S_R)$ implies the non-consistency of the subsystem $(S_{\widehat R_k})$, thus 
$\delta^{\widehat R_k}_k > 0$.

\noindent
To prove the implication $\Longleftarrow$,  let $U \subseteq N$ such that the subsystem $(S_U)$ is consistent and $R \subseteq U$, we must prove the equality $R = U$.\\ 
By assumption,    if $U$ satisfies the inclusion $U \subseteq \{1 , 2 , \dots ,s\}$, then   we conclude that    $R = U$. 

Let us prove the inclusion $U \subseteq \{1 , 2 , \dots ,s\}$ by contradiction.

Suppose that there is an index $k \in \{ s+ 1 , \dots , n\} \cap U$. Then we have: 
\[  R \subset \widehat R_k \subseteq U, \quad 
\widehat R_k = R \cup \{ k\}.\]
From the consistency of the subsystem $(S_U)$, we deduce that the subsystem $(S_{\widehat R_k})$ is consistent. By applying 
(Lemma \ref{lemma:deltaRchapeau}), we obtain $\delta^{\widehat R_k}_k = 0$  and then a contradiction. 
\end{proof}

We can then obtain, iteratively, all the maximal consistent subsystems of the whole inconsistent system.  (Example \ref{ex:allconssub}) shows that two maximal consistent subsystems \textit{do not necessarily have the same cardinality}.

\begin{example}\label{ex:allconssub}
    Let the system $A \Box_{\min}^{\max} x = b$ be defined by:
\begin{equation*}\label{eq:Ab}
    A =\begin{bmatrix}
    0.98& 0.02& 0.10\\
    0.80& 0.31& 0.18\\
    0.78& 0.38& 0.26\\
    0.77& 0.20& 0.85\\
         \end{bmatrix} \text{ and } b = \begin{bmatrix}0.13\\0.28 \\0.54\\
         0.70\\      \end{bmatrix}.
\end{equation*}
We have $b_1 < b_2 < b_3 < b_4$ and each subsystem reduced to a single equation is consistent i.e., for all $i \in \{1,2,3,4\}$, $\text{ the system } (S_{\{i\}}) : A_{\{i\}} \Box_{\min}^{\max} x = b_i \text{ is consistent.}$

We start from $\mathcal{E}^1 = \bigg\{\{1\}\bigg\}$. We have:
\begin{itemize}
    \item $\mathcal{E}^2 = \bigg\{ \{1\}, \{2\}, \{1,2\} \bigg\}$,
    \item $\mathcal{E}^3 = \bigg\{ \{1\}, \{2\}, \{3\},  \{1,2\} \bigg\}$,
    \item $\mathcal{E}^4 = \bigg\{ \{1\}, \{2\}, \{3\}, \{4\},  \{1,2\}, \{1,4\}, \{2,4\}, \{3,4\}, \{1,2,4\}   \bigg\}$.
\end{itemize}

So, $\{ 3,4 \}$ and $\{ 1,2, 4 \}$ are the two subsets of indexes describing the maximal consistent subsystems of the inconsistent $\max-\min$ system.
\end{example}

Our method is much faster than checking, using (\ref{eq:consiste}), the consistency of each of the subsystems $(S_R)$ of $(S)$ one by one.

\section{Conclusion}
 In this article, we studied the inconsistency of systems of $\max-T$ fuzzy relational equations of the form $A \Box_{T}^{\max} x = b$, where $T$ is a t-norm among $\min$, product or Łukasiewicz's t-norm.  By computing the Chebyshev distance $\Delta = \inf_{c \in \mathcal{C}} \Vert b - c \Vert$
 associated to the second member $b$ of an inconsistent   $\max-T$ system where $\mathcal{C}$ is the set of second members of consistent systems defined with the same matrix $A$,   using the analytical formulas given in \cite{baaj2023chebyshev,baaj2023maxmin}, we showed in (Theorem \ref{th:1}) that we can directly find a canonical maximal consistent subsystem of this inconsistent  system. We showed that the computational complexity for obtaining this maximal consistent subsystem is the same as that required for computing $\Delta$. 
 
 For an inconsistent $\max-\min$ system, we introduced a method for efficiently  obtaining  all  the consistent subsystems of this system. The method is based on the formula of $\Delta$ for $\max-\min$ systems and allows us to obtain, iteratively, all the maximal consistent systems of the considered inconsistent system. 
For the moment, we have not attempted to adapt this method to the cases of $\max-$product and $\max-$Łukasiewicz compositions.

This work may be useful for solving inconsistency issues in $\max-T$ systems involved in $\max-T$ learning methods, such as the paradigm of \cite{baaj2023maxmin} for  $\max-\min$ learning weight matrices  according to  training data, learning the rule parameters for possibilistic rule-based systems \cite{baajpossslearning,baaj2023maxmin} or learning associate memories \cite{sussner2006implicative}. It can also be useful for applications based on $\max-T$ systems, e.g. spatial analysis \cite{di2011spatial}, diagnostic problems \cite{dubois1995fuzzy}.

\bibliographystyle{plainnat} 
\bibliography{ref} 
\end{document}